\documentclass{article}
\usepackage[final,nonatbib]{nips_2018}

\usepackage[utf8]{inputenc} 
\usepackage[T1]{fontenc}    
\usepackage{hyperref}       
\usepackage{url}            
\usepackage{booktabs}       
\usepackage{amsfonts}       
\usepackage{nicefrac}       
\usepackage{microtype}      
\usepackage[dvips]{graphicx}
\usepackage{amssymb,amsmath,amsfonts,color}
\usepackage{url}
\usepackage{hyperref}
\usepackage[normalem]{ulem}
\usepackage{amsmath}

\usepackage[mathcal]{eucal}

\bibliographystyle{abbrv}

\oddsidemargin .25in    
\evensidemargin .25in \marginparwidth 0.07 true in
\topmargin -0.5in \addtolength{\headsep}{0.25in}
\textheight 8.5 true in       
\textwidth 6.0 true in        
\widowpenalty=10000 \clubpenalty=10000

\newcommand{\BEAS}{\begin{eqnarray*}}
\newcommand{\EEAS}{\end{eqnarray*}}
\newcommand{\BEA}{\begin{eqnarray}}
\newcommand{\EEA}{\end{eqnarray}}
\newcommand{\BEQ}{\begin{equation}}
\newcommand{\EEQ}{\end{equation}}
\newcommand{\BIT}{\begin{itemize}}
\newcommand{\EIT}{\end{itemize}}
\newcommand{\BNUM}{\begin{enumerate}}
\newcommand{\ENUM}{\end{enumerate}}
\newcommand{\BA}{\begin{array}}
\newcommand{\EA}{\end{array}}
\newcommand{\diag}{\mathop{\rm diag}}
\newcommand{\Diag}{\mathop{\rm Diag}}

\newcommand{\rb}{\mathbb{R}}
\newcommand{\cb}{\mathbb{C}}

\newcommand{\nb}{\mathbb{N}}
\newcommand{\BlackBox}{\rule{1.5ex}{1.5ex}}  

\newenvironment{proof}{\par\noindent{\bf Proof\ }}{\hfill\BlackBox\\[2mm]}
\newtheorem{lemma}{Lemma}
\newtheorem{theorem}{Theorem}

\newtheorem{remark}{Remark}



\parindent 0pt
\topsep 4pt plus 1pt minus 2pt
\partopsep 1pt plus 0.5pt minus 0.5pt
\itemsep 2pt plus 1pt minus 0.5pt
\parsep 2pt plus 1pt minus 0.5pt
\parskip .5pc

\def \H{{\mathcal H}}

\newtheorem{assumption}{Assumption}
\newtheorem{definition}{Definition}
\newtheorem{corollary}{Corollary}

\let\OLDthebibliography\thebibliography
\renewcommand\thebibliography[1]{
  \OLDthebibliography{#1}
  \setlength{\parskip}{2.4pt}
  \setlength{\itemsep}{0pt plus 0.3ex}
}

\title{Relating Leverage Scores and Density\\  using Regularized Christoffel Functions}

\author{Edouard Pauwels\\IRIT-AOC\\ Universit\'e Toulouse 3 \\Paul Sabatier\\Toulouse, France \And Francis Bach\\INRIA\\Ecole Normale Sup\'erieure\\PSL Research University\\Paris, France \And Jean-Philippe Vert\\Google Brain\\CBIO Mines ParisTech\\PSL Research University\\Paris, France}

\date{\today}

\begin{document}
\maketitle
\begin{abstract}
				Statistical leverage scores emerged as a fundamental tool for matrix sketching and column sampling with applications to low rank approximation, regression, random feature learning and quadrature. Yet, the very nature of this quantity is barely understood. Borrowing ideas from the orthogonal polynomial literature, we introduce the \emph{regularized Christoffel function} associated to a positive definite kernel. This uncovers a variational formulation
for leverage scores for kernel methods and allows to elucidate their relationships with the chosen kernel as well as population density. Our main result quantitatively describes a decreasing relation between leverage score and population density for a broad class of kernels on Euclidean spaces. Numerical simulations support our findings.
\end{abstract}

\section{Introduction}
Statistical leverage scores have been historically used as a diagnosis tool for linear regression \cite{hoaglin1978hat,velleman1981efficient,chatterjee1986influential}. To be concrete, for a ridge regression problem with design matrix $X$ and regularization parameter $\lambda > 0$, the leverage score of each data point is given by the diagonal elements of $X (X^\top X + \lambda I)^{-1} X^\top $. These leverage scores characterize the importance of the corresponding observations and are key to efficient subsampling with optimal approximation guarantees.
Therefore, leverage scores emerged as a fundamental tool for matrix sketching and column sampling \cite{mahoney2009cur,mahoney2011randomized,drineas2012fast,wang2013improving}, and play an important role in low rank matrix approximation \cite{clarkson2013low,bach2013sharp}, regression \cite{alaoui2015fast,rudi2015less,ma2015statistical}, random feature learning \cite{rudi2017generalization} and quadrature \cite{bach2017equivalence}. 
The notion of leverage score is seen as an intrinsic, setting-dependent quantity, and should eventually be estimated. In this work we elucidate a relation between leverage score and the learning setting (population measure and statistical model) when used with kernel methods.

For that purpose, we introduce a variant of the \emph{Christoffel function}, a classical tool in polynomial algebra which provides a bound for the evaluation at a given point of a given degree polynomial $P$ in terms of an average value of $P^2$. 
The Christoffel function is an important object in the theory of orthogonal polynomials \cite{szego1974orthogonal,dunkl2001orthogonal} and found applications in approximation theory \cite{nevai1986geza} and spectral analysis of random matrices \cite{van1987asymptotics}. It is parametrized by the degree of polynomials considered and an associated measure, and we know that, as the polynomial degree increases, it encodes information about the support and the density of its associated measures, see \cite{mate1980bernstein,mate1991szego,totik2000asymptotics} for the univariate case and 
\cite{bos1994asymptotics,bos1998asymptotics,xu1996asymptotics,xu1999asymptoticsSimplex,kroo2012christoffel,lasserre2017christoffel} in the multivariate case. 

The variant we propose amounts to replacing the set of polynomials with fixed degree, used in the definition of the Christoffel function, by a set of function with bounded norm in a reproducing kernel Hilbert space (RKHS)\footnote{Kernelized Christoffel functions were first proposed by Laurent El Ghaoui and independently studied in \cite{askari2018kernel}.}. More precisely, given a density $p$ on $\rb^d$ and a regularization parameter $\lambda > 0$, we introduce $C_\lambda : \rb^d \to \rb$, the \textit{regularized Christoffel function} where $\lambda$ plays a similar role as the degree for polynomials. The function $C_\lambda$ turns out to have intrinsic connections with statistical leverage scores, as the quantity $1/C_\lambda$ corresponds precisely to a notion of leverage used in \cite{bach2013sharp,alaoui2015fast,rudi2015less,bach2017equivalence}. As a consequence, we uncover a variational formulation for leverage scores which helps elucidate connections with the RKHS and the density $p$ on $\rb^d$.
				
Our main contribution is a precise asymptotic expansion of $C_\lambda$ as $\lambda \to 0$ under restrictions on the RKHS. To give a concrete example, if we consider the Sobolev space of functions on $\rb^d$ with squared integrable derivatives of order up to $s > d/2$, we obtain, the asymptotic equivalent
\begin{align*}
				C_\lambda(z) \quad\underset{\lambda \to 0,\,\lambda > 0}{\sim}\quad q_0^{-1}  \lambda^{d/(2s)} p(z)^{1 - d / 2s},
\end{align*}
for $z$ a continuity point of $p$ with $p(z) > 0$. 
Here $q_0$ is an explicit constant which only depends on the RKHS. We recover scalings with respect to $\lambda$ which matches known estimates for the usual degrees of freedom~\cite{rudi2015less,bach2017equivalence}. More importantly, we also obtain a precise spatial description of $C_\lambda(z)$ (i.e., dependency on $z$), and deduce that the leverage score is itself proportional to $p(z)^{d/(2s) - 1}$ in the limit. Roughly speaking, large scores are given to low density regions (note that $d/(2s)-1<0$). This result has several potential consequences for machine learning:

(i) The Christoffel function could be used for density or support estimation. This has connections with the spectral approach proposed in \cite{devito2014learning} for support learning.
(ii) This could provide a more efficient way to estimate leverage scores through density estimation.
(iii) When leverage scores are used for sampling, the required sample size depends on the ratio between the maximum and the average leverage scores~\cite{rudi2015less,bach2017equivalence}. Our results imply that this ratio can be large if there exists low-density regions, while it remains bounded otherwise.

\paragraph{Organization of the paper.}
We introduce the regularized Christoffel function in Section \ref{sec:regChristoFun} and explicit connections with leverage scores and orthogonal polynomials. Our main result and assumptions are described in abstract form in Section \ref{sec:mainResult}, they are presented as a general recipe to compute asymptotic expansions for the regularized Christoffel function. Section \ref{sec:examples} describes an explicit example and a precise asymptotic for an important class of RKHS related to Sobolev spaces. We illustrate our results numerically in Section \ref{sec:numerics}. The proofs are postponed to Appendix \ref{sec:proofs} while Appendix \ref{sec:additional} contains additional properties and simulations, and Appendix \ref{sec:lemmas} contains further lemmas.

\paragraph{Notations.}
\label{sec:notations}
Let $d$ denote the ambient dimension, $\mathbf{0}$ denote the origin in $\rb^d$ and $C(\rb^d), L^1(\rb^d), L^2(\rb^d), L^\infty(\rb^d)$ denote the complex-valued function on $\rb^d$ which are respectively continuous, absolutely integrable, square integrable, measurable and essentially bounded. For any $f \in L^1(\rb^d)$, let $\hat{f}\colon \rb^d \mapsto \cb$ be its Fourier transform, $\hat{f}\colon\omega \mapsto \int_{\rb^d} f(x) e^{-i x^\top\omega}dx$.
For $g \in L^1(\rb^d)$, its inverse Fourier transform is $x \mapsto \frac{1}{(2\pi)^d} \int_{\rb^d} g(x) e^{i x^\top  \omega}d\omega$. If $f\in L^1(\rb^d) \cap C(\rb^d)$ and $\hat{f} \in L^1(\rb^d)$, then inverse transform composed with direct transform leaves $f$ unchanged. 
The Fourier transform is extended to $L^2(\rb^d)$ by a density argument. It defines an isometry: if $f \in L^2(\rb^d)$, Parseval formula writes $\int_{\rb^d} |f(x)|^2 dx = \frac{1}{(2\pi)^d} \int_{\rb^d} |\hat{f}(\omega)|^2 d\omega$. See, \textit{e.g.}, \cite[Chapter 11]{hunter2001applied}.

We identify $x$ with a set of $d$ real variables $x_1, \ldots, x_d$. We associate to a multi-index $\beta = (\beta_i)_{i =1, \ldots, d} \in \nb^d$ the monomial $x^{\beta} := x_1^{\beta_1} x_2^{\beta_2} \ldots x_d^{\beta_d}$ whose degree is $|\beta| := \sum_{i=1}^d \beta_i$. The linear span of monomials forms the set of $d$-variate polynomials. The degree of a polynomial is the highest of the degrees of its monomials with nonzero coefficients (null for the null polynomial). A polynomial $P$ is said to be homogeneous of degree $2s \in \nb$ if for all $\lambda \in \rb$, $x \in \rb^d$, $P(\lambda x) = \lambda^{2s} P(x)$, it is then composed only of monomials of degree $2s$. See \cite{dunkl2001orthogonal} for further details.

\section{Regularized Christoffel function}
\label{sec:regChristoFun}
\subsection{Definition}
\label{sec:definition}
In what follows, $k$ is a positive definite, continuous, bounded, integrable, real-valued kernel on $\rb^d \times \rb^d$ and $p$ is an integrable real function over $\rb^d$. We denote by $\H$ the RKHS associated to $k$ which is assumed to be dense in $L^2(p)$, the normed space of functions, $f \colon \rb^d \mapsto \rb$, such that $\int_{\rb^d} f^2(x) p(x) dx < + \infty$. This will be made more precise in Section \ref{sec:assumptions}.
\begin{definition}
The \emph{regularized Christoffel function}, is given for any $\lambda > 0$, $z \in \rb^d$ by
\begin{align}
				C_{\lambda,p,k}(z) &= \inf_{ f \in \mathcal{H}} \int_{\rb^{d}}f(x)^2 p(x)dx + \lambda \| f \|_{\mathcal{H}}^2 \quad\mbox{ such that } \quad f(z) = 1\,.
				\label{eq:christoffel}
\end{align}
\end{definition}
If there is no confusion about the kernel $k$ and the density $p$, we will use the notation $C_\lambda = C_{\lambda,p,k}$. More compactly, setting for $z \in \rb^d$, $\H_z = \left\{ f \in \H, f(z) = 1 \right\}$, we have $C_\lambda\colon z \mapsto \inf_{ f \in \mathcal{H}_z} \|f\|_{L^2(p)}^2 + \lambda \| f \|_{\mathcal{H}}^2$.
The value of (\ref{eq:christoffel}) is intuitively connected to the density $p$. Indeed, the constraint $f(z) = 1$ forces $f$ to remains far from zero in a neighborhood of $z$. Increasing the $p$ measure of this neighborhood increases the value of the Christoffel function. In low density regions, the constraint has little effect which allows to consider smoother functions with little overlap with higher density regions and decreases the overall cost. An illustration is given in Figure \ref{fig:illustrChristo}.

\begin{figure}[t]
				\centering
				\includegraphics[width=.9\textwidth]{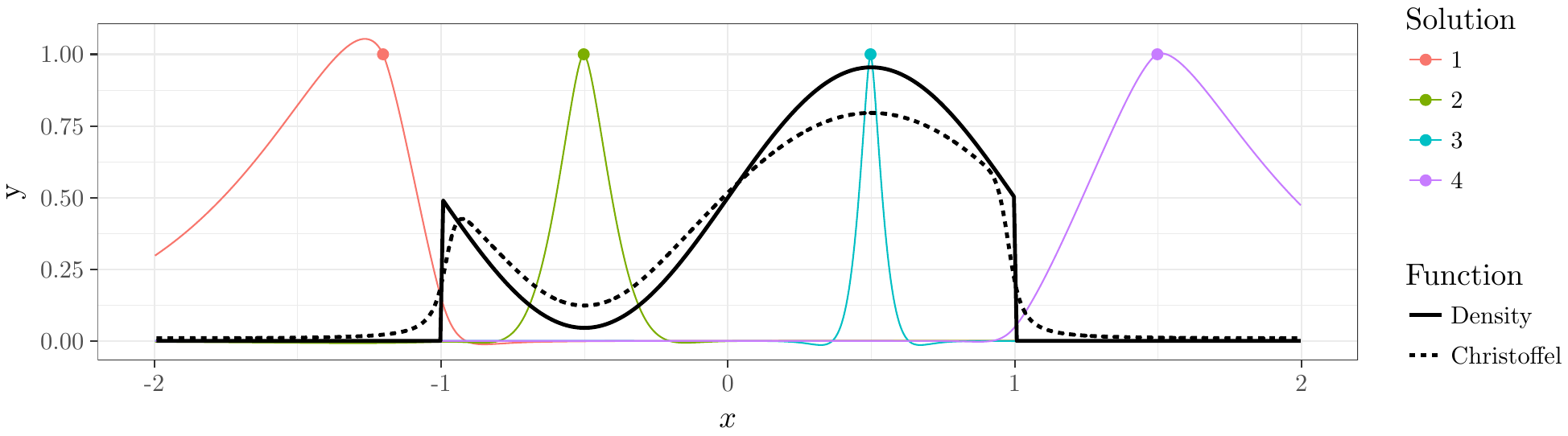}
                
                \vspace*{-.1cm}
                
				\caption{The black lines represent a density and the corresponding Christoffel function. The colored lines are solutions of problem in (\ref{eq:christoffel}), the corresponding $z$ being represented by the dots. Outside the support, the optimum is smooth  and high values have small overlap with the support. Inside the support, the optimum is less smooth, it forms a peak, sharper in higher density regions.}
				\label{fig:illustrChristo}
\end{figure}

\subsection{Relation with orthogonal polynomials}
The name \textit{Christoffel} is borrowed from the orthogonal polynomial literature \cite{szego1974orthogonal,dunkl2001orthogonal,nevai1986geza}. In this context, the Christoffel function is defined as follows for any degree $l\in\mathbb{N}$:
\begin{align*}
				\Lambda_l \colon z \mapsto \min_{P \in \rb_l[X]} \int (P(x))^2 p(x)dx \quad \mbox{ such that } \quad P(z) = 1\,,
\end{align*}
where $\rb_l[X]$ denotes the set of $d$ variate polynomials of degree at most $l$. The regularized Christoffel function in (\ref{eq:christoffel}) is a direct extension, replacing the polynomials of increasing degree by functions in a RKHS with increasing norm. $\Lambda_l$ has connections with quadrature and interpolation \cite{nevai1986geza}, potential theory and random matrices \cite{van1987asymptotics}, orthogonal polynomials \cite{nevai1986geza,dunkl2001orthogonal}.  Relating the asymptotic for large $l$ and properties of $p$ has also been a long lasting subject of research \cite{
mate1980bernstein,mate1991szego,totik2000asymptotics,bos1994asymptotics,bos1998asymptotics,xu1996asymptotics,xu1999asymptoticsSimplex,kroo2012christoffel,lasserre2017christoffel}. The idea of studying the relation between $C_\lambda$ and  $p$ was directly inspired by these works.

\subsection{Relation with leverage scores for kernel methods}
The (non-centered) covariance of $p$ on $\H$ is the bilinear form $\text{Cov}:\H \times \H \rightarrow \rb$ given by:
\[
\forall (f,g)\in\H^2\,,\quad \text{Cov}(f,g) = \int_{\rb^d}f(x)g(x) p(x)dx\,.
\]
The covariance operator $\Sigma:\H\rightarrow\H$ is then defined such that for all $f,g \in \H$, 
$
\text{Cov}(f,g) = \langle \Sigma f , g \rangle_{\H}\,.
$
If $\Sigma$ is bounded with respect to $\|\cdot\|_\H$, then Lemma \ref{lem:quadProgConstr} in Appendix \ref{sec:lemmas} shows that:
\begin{align*}
	\forall z\in\rb^d,\quad			C_{\lambda}(z) =\left(\left\langle k(z,\cdot),(\Sigma + \lambda I)^{-1} k(z,\cdot)  \right\rangle_{\H}\right)^{-1},
\end{align*}
 which provides a direct link with leverage scores~\cite{bach2017equivalence}, as $C_\lambda(z)$ is exactly the inverse of the population leverage score at $z$. 

As $\lambda \to 0$, we typically have 
$\left\langle k(z,\cdot),(\Sigma + \lambda I)^{-1} k(z,\cdot)  \right\rangle_{\H} \to +\infty.$
It is worth emphasizing that spectral estimators (with other functions of the covariance operator than $(\Sigma + \lambda I)^{-1}$) have been proposed for support inference in \cite{devito2014learning}. An example of such estimator has the form $F_\lambda\colon z \mapsto  \left\langle k(z,\cdot),\Sigma(\Sigma + \lambda I)^{-1} k(z,\cdot)  \right\rangle_{\H}$,
for which finite level sets encode information about the support of~$p$ as $\lambda \to 0$ \cite{devito2014learning}. Our main result should extend to broader classes of spectral functions.

\subsection{Estimation from a discrete measure}
\label{sec:plugin}
Practical computation of the regularized Christoffel function requires to have access to the covariance operator~$\Sigma$, which is not available in closed form in general. A plugin solution consists in replacing integration with weight $p$ by a discrete approximation of the form $d \rho_n =   \sum_{i=1}^n \eta_i \delta_{x_i}$, 
where for each $i = 1, \ldots, n$, $\eta_i \in \rb_+$ is a weight, $x_i \in \rb^d$ and $\delta_{x_i}$ denotes the Dirac measure at $x_i$. We may assume without loss of generality that the points are distinct. Given a kernel function $k$ on $\rb^d\times \rb^d$, let $K = \left( k(x_i,x_j) \right)_{i,j = 1, \ldots, n} \in \rb^{n \times n}$ be the Gram matrix and $K_i$ the $i$-th column of $K$ for $i = 1,\ldots, n$. We have a closed form expression for the Christoffel function with plug-in measure $d\rho_n$, for any $\lambda > 0$, $i = 1, \ldots, n$:
\begin{align}
				C_{\lambda,\rho_n,k}(x_i) = \inf_{f(x_i)=1} \frac{1}{n} \sum_{j=1}^n \eta_j(f(x_j))^2 + \lambda \|f\|_{\H}^2 = \left(K_i^\top \left( K \diag(\eta) K + \lambda K \right)^{-1}K_i\right)^{-1} \,.
				\label{eq:empiricalChristoffel}
\end{align}
This is a consequence of the representer theorem \cite{scholkopf2001generalized}; Lemma \ref{lem:quadProgConstr} allows to deal with the constraint explicitely. 
Note that if $\eta_i > 0$ for all $i=1,\ldots, n$, then the Christoffel function may be obtained as a weighted diagonal element of a smoothing matrix, as for all $i$, thanks to the matrix inversion lemma, $K_i^\top \left( K \diag(\eta) K + \lambda K \right)^{-1}K_i  = \eta_i^{-1} \left(K ( K + \lambda \Diag(\eta)^{-1} )^{-1}  \right)_{ii}$.
This draws an important connection with statistical leverage score \cite{mahoney2011randomized,drineas2012fast} as it corresponds to the notion introduced for kernel ridge regression \cite{bach2013sharp,alaoui2015fast,rudi2015less}. It remains to choose $\eta$ so that $d\rho_n$ approximates integration with weight $p$.

\paragraph{Monte Carlo approximation:} Assuming that $\int_{\rb^d}p(x)dx = 1$, if one has the possibility to draw an \textit{i.i.d.} sample $\left( x_i \right)_{i=1,\ldots, n}$, with density $p$, then one can use $\eta_i = \frac{1}{n}$ for $i=1,\ldots, n$. The quality of this approximation is of order $\lambda^{-2}n^{{-1}/{2}}$ (see Appendix \ref{sec:additional}). If $\lambda^2 n^{1/2}$ is large enough, then we obtain a good estimation of the Christoffel function (note that better bounds could be obtained with respect to $\lambda$ using tools from~\cite{bach2013sharp,alaoui2015fast,rudi2015less}).

\paragraph{Riemann sums:} If the density $p$ is piecewise smooth, one can approximate integrals with weight $p$ by using a uniform grid and a Riemann sum with weights proportional to $p$. The bound in Eq.~(\ref{eq:boundDiscreteApprox}) also holds, the quality of this approximation is typically of the order of $n^{{-1}/{d}}$ which is attractive in dimension 1 but quickly degrades in larger dimensions.

Depending on properties of the integrand, quasi Monte Carlo methods could yeld faster quadrature rules \cite{dick2013high}, more quantitative deviation bounds and faster rates is left for future research.

\section{Relating regularized Christoffel functions to density}
\label{sec:mainResult}

We first make precise our notations and assumptions in Section \ref{sec:assumptions} and describe our main result in Section \ref{sec:mainResSubsec} using Assumption \ref{ass:kernelAssumption3} which is given in abstract form. We then describe how this assumption is satisfied by a broad class of kernels in Section \ref{sec:examples}.
\subsection{Assumptions}
\label{sec:assumptions}
\begin{assumption}\hfill
				\label{ass:kernelAssumption}
				\begin{enumerate}
                \item The kernel $k$ is translation invariant: for any $x,y \in \rb^{d}$, $k(x,y) = q(x - y)$ where $q \in L^1(\rb^{d})$ is the inverse Fourier transform of $\hat{q}\in L^1(\rb^d)$ which is real valued and strictly positive.
								\item The density $p \in L^1(\rb^{d}) \cap L^{\infty}(\rb^{d})$ is finite and nonnegative everywhere.
				\end{enumerate}
\end{assumption}
Under Assumption \ref{ass:kernelAssumption}, $k$ is a positive definite kernel by Bochner's theorem and we have an explicit characterization of the associated RKHS (see \textit{e.g.} \cite[Proposition 4]{devito2014learning}),

\begin{align}
\H = \Big\{ f \in C(\rb^{d}) \cap L^2(\rb^{d});\; \int_{\rb^{d}} \frac{|\hat{f}(\omega)|^2}{\hat{q}(\omega)} d \omega < + \infty \Big\} \,,
\label{eq:charcterizationRKHS1}
\end{align}
with inner product
\begin{align}
\left\langle \cdot , \cdot \right\rangle_\H \colon (f,g) \mapsto \frac{1}{(2\pi)^{d}} \int_{\rb^{d}} \frac{\hat{f}(\omega) \bar{\hat{g}}(\omega)}{\hat{q}(\omega)} d\omega\,.
\label{eq:charcterizationRKHS2}
\end{align}

\begin{remark}
The assumption that $\hat{q}\in L^1(\rb^{d})$ implies by the Riemann-Lebesgue theorem that $q$ is in $C_0(\rb^d)$, the set of continuous functions vanishing at infinity. Since $\hat{q}$ is strictly positive, its support is~$\rb^d$ and \cite[Proposition 8]{sriperumbudur10relation} implies that $k$ is $c_0$-universal, i.e., that $\H$ is dense in $C_0(\rb^d)$ w.r.t. the uniform norm. As a result, $\H$ is also dense in $L^2(d\rho)$ for any probability measure $\rho$.
\end{remark}
\begin{remark}
				For any $f \in \H$, we have by Cauchy-Schwartz inequality 
				\begin{align*}
								\left(\int_{\rb^d} |\hat{f}(\omega)| d\omega \right)^2 
								\leq \int_{\rb^d} \frac{|\hat{f}(\omega)|^2}{\hat{q}(\omega)} d\omega\int_{\rb^d}\hat{q}(\omega)d\omega,
				\end{align*}
				and the last term is finite by Assumption \ref{ass:kernelAssumption}. Hence $\hat{f} \in L^1(\rb^d)$ and we have $f(0) = \int_{\rb^d}\hat{f}$ where the integral is understood in the usual sense. In this setting any $f \in \H$ is uniquely determined everywhere on $\rb^{d}$ by its Fourier transform and we have for any $f \in \H$, $\|f\|_{L^\infty(\rb^d)} \leq \|\hat{f}\|_{L^1(\rb^d)} \leq \|f\|_{\H} \sqrt{q(0)}$.
				\label{rem:transformInL1}
\end{remark}
\subsection{Main result}
\label{sec:mainResSubsec}
Problem (\ref{eq:christoffel}) is related to a simpler variational problem with explicit solution. For any $\lambda > 0$, let
\begin{align}
				D(\lambda) &:= \min_{f \in \H} \int_{\rb^{d}}f(x)^2 dx + \lambda \| f \|_{\H}^2 \mbox{ subject to } f(\mathbf{0}) = 1.
				\label{eq:mainSubProblem}
\end{align}
Note that $D(\cdot)$ does not depend on $p$ and corresponds to the Christoffel function at the origin $\mathbf{0}$, or any other points by translation invariance, for the Lebesgue measure on $\rb^d$.
The solutions of (\ref{eq:mainSubProblem}) have an explicit description which proof is presented in Appendix~\ref{sec:lemmasMainRes}.
\begin{lemma}
				For any $\lambda > 0$,
				$D(\lambda) = \frac{(2 \pi)^{d}}{\int_{\rb^{d}} \frac{\hat{q}(\omega)}{\lambda + \hat{q}(\omega)}d\omega  }$,				
				and this value is attained by the function 
				\begin{align*}
								f_\lambda \colon x \mapsto D(\lambda) \frac{1}{(2 \pi)^{d}} \int_{\rb^{d}} \frac{\hat{q}(\omega) e^{i \omega^\top x}}{\hat{q}(\omega) + \lambda}d\omega.
				\end{align*}
				\label{lem:surrogateSolution}
\end{lemma}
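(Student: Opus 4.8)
The plan is to solve the convex optimization problem \eqref{eq:mainSubProblem} by passing to the Fourier domain, where both the objective and the constraint become transparent. First I would use the RKHS characterization \eqref{eq:charcterizationRKHS1}--\eqref{eq:charcterizationRKHS2}: for $f \in \H$ we have $f \in L^2(\rb^d)$ with $\int f^2 = \frac{1}{(2\pi)^d}\int |\hat f(\omega)|^2 d\omega$ by Parseval, and $\|f\|_\H^2 = \frac{1}{(2\pi)^d}\int |\hat f(\omega)|^2/\hat q(\omega)\, d\omega$. By Remark \ref{rem:transformInL1}, $\hat f \in L^1(\rb^d)$ and the constraint $f(\mathbf 0) = 1$ reads $\frac{1}{(2\pi)^d}\int \hat f(\omega)\,d\omega = 1$ (absorbing the normalization; I will be careful about the $(2\pi)^d$ placement consistently with the paper's convention). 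So \eqref{eq:mainSubProblem} becomes: minimize $\frac{1}{(2\pi)^d}\int |\hat f(\omega)|^2\big(1 + \lambda/\hat q(\omega)\big)\,d\omega$ over $g = \hat f \in L^1 \cap L^2$ subject to $\frac{1}{(2\pi)^d}\int g(\omega)\,d\omega = 1$.

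Next I would solve this quadratic-objective / linear-constraint problem pointwise in $\omega$ via a Lagrange multiplier (or directly by Cauchy--Schwarz). Writing $w(\omega) = 1 + \lambda/\hat q(\omega) = (\hat q(\omega)+\lambda)/\hat q(\omega)$, the Cauchy--Schwarz inequality gives
\begin{align*}
1 = \left(\frac{1}{(2\pi)^d}\int g(\omega)\,d\omega\right)^2 \le \left(\frac{1}{(2\pi)^d}\int |g(\omega)|^2 w(\omega)\,d\omega\right)\left(\frac{1}{(2\pi)^d}\int \frac{d\omega}{w(\omega)}\right),
\end{align*}
so the objective is at least $\big(\frac{1}{(2\pi)^d}\int \hat q/(\hat q+\lambda)\big)^{-1}$, with equality when $g(\omega) \propto 1/w(\omega) = \hat q(\omega)/(\hat q(\omega)+\lambda)$. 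Normalizing to meet the constraint yields $\hat f_\lambda(\omega) = D(\lambda)\,\hat q(\omega)/(\hat q(\omega)+\lambda)$ with $D(\lambda) = (2\pi)^d/\int \frac{\hat q(\omega)}{\hat q(\omega)+\lambda}\,d\omega$; inverting the Fourier transform gives the stated $f_\lambda$. One then reads off that the optimal value equals $D(\lambda)$, matching the claim.

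I should also check that the candidate $f_\lambda$ actually lies in $\H$ and is admissible: since $0 \le \hat q/(\hat q+\lambda) \le 1$ and $\hat q \in L^1$, we get $\hat f_\lambda \in L^1$; moreover $\int |\hat f_\lambda|^2/\hat q = D(\lambda)^2 \int \hat q/(\hat q+\lambda)^2 < \infty$ (bounded by $\lambda^{-1}\int \hat q/(\hat q+\lambda) < \infty$), so $f_\lambda \in \H$ by \eqref{eq:charcterizationRKHS1}, and $f_\lambda(\mathbf 0) = \frac{1}{(2\pi)^d}\int \hat f_\lambda = 1$ by construction. Finiteness of $\int \hat q/(\hat q+\lambda)\,d\omega$ (needed for $D(\lambda)$ to be well-defined and positive) follows from $\hat q \in L^1$ and $\hat q > 0$. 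The main obstacle is not the algebra but the justification that the pointwise Cauchy--Schwarz minimizer is the true infimum over $\H$ — i.e. that restricting to $g \in L^1 \cap L^2$ with the integral constraint is exactly problem \eqref{eq:mainSubProblem}, and that the minimum is attained rather than merely approached. This is handled by exhibiting $f_\lambda \in \H$ achieving the Cauchy--Schwarz lower bound, so the infimum is attained and equals $D(\lambda)$; the density/regularity bookkeeping (that every admissible $f$ has $\hat f$ obeying the constraint in the stated sense, using Remark \ref{rem:transformInL1}) is the only place requiring care.
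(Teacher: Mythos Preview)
Your proof is correct and follows essentially the same route as the paper: pass to the Fourier domain via \eqref{eq:charcterizationRKHS1}--\eqref{eq:charcterizationRKHS2} and Remark~\ref{rem:transformInL1}, then solve the resulting linearly constrained quadratic problem. The only difference is in that last step: the paper casts the Fourier-side problem as $\min_{\langle \hat f,\hat q\rangle_{\tilde\H}=1}\langle \hat f, M\hat f\rangle_{\tilde\H}$ in the transported Hilbert space $\tilde\H$ and invokes its abstract Lemma~\ref{lem:quadProgConstr} (whose proof is a completion of the square), whereas you obtain the lower bound and the minimizer in one stroke via weighted Cauchy--Schwarz and then verify attainment by checking $f_\lambda\in\H$. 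Both arguments are equivalent here; yours is slightly more self-contained, while the paper's formulation makes explicit the operator-theoretic structure ($M=$ multiplication by $\hat q+\lambda$) that it reuses elsewhere.
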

\begin{remark}
                We directly obtain $D(\lambda) \geq \frac{\left( 2\pi \right)^d \lambda}{q(0)}$,  for any $\lambda > 0$. Finally, let us mention that Assumption~\ref{ass:kernelAssumption} ensures that $\lim_{\lambda \to 0} D(\lambda) = 0$ as
$\int_{\rb^d}\frac{\hat{q}(\omega)}{\lambda + \hat{q}(\omega)} d\omega\geq \int_{\hat{q}(\omega) \geq \lambda} \frac{d\omega}{2}$ which diverges as $\lambda \to 0$.
				\label{rem:convTo0}
\end{remark}

We denote by $g_\lambda$ the inverse Fourier transform of $\frac{\hat{q}}{\lambda + \hat{q}}$, i.e., $g_\lambda = f_\lambda / D(\lambda)$. It satisfies $g_\lambda(0) = \frac{1}{D(\lambda)}$.
Intuitively, as $\lambda$ tends to $0$, $g_\lambda$, should be approaching a Dirac in the sense that $g_\lambda$ tends to $0$ everywhere except at the origin where it goes to $+\infty$. The purpose of the next Assumption is to quantify this intuition. 

\begin{assumption}
				For the kernel $k$ given in Assumption \ref{ass:kernelAssumption} and $f_\lambda$ given in Lemma \ref{lem:surrogateSolution}, there exists $\varepsilon \colon \rb_+ \to \rb_+$ such that, as $\lambda \to 0$, $\varepsilon(\lambda) \to 0$, and
				\begin{align*}
								\int_{\|x\| \geq \varepsilon(\lambda)} f_\lambda^2(x)dx = o(\lambda D(\lambda)).
				\end{align*}
				\label{ass:kernelAssumption3}
\end{assumption}

See Section~\ref{sec:examples} for specific examples. We are now ready to describe the asymptotic inside the support of $p$, the proof is given in Appendix~\ref{sec:proofTh1}.
\begin{theorem}
				Let $q$, $k$ and $p$ be given as in Assumption \ref{ass:kernelAssumption} and let $C_\lambda$ be defined as in (\ref{eq:christoffel}). If Assumption \ref{ass:kernelAssumption3} holds, then, for any $z \in \rb^{d}$ such that $p(z) > 0$ and $p$ is continuous at $z$, we have
				\begin{align*}
								C_\lambda(z) \quad\underset{\lambda\to 0, \,\lambda > 0}{\sim}\quad p(z) D\left( \frac{\lambda}{p(z)} \right).
				\end{align*}
                \label{th:mainResult}
\end{theorem}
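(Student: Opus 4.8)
The plan is to show that the Christoffel function $C_\lambda(z)$, which is a localized weighted quadratic minimization around $z$, is asymptotically governed by the behavior of $p$ near $z$, and that near $z$ the weight $p$ looks like the constant $p(z)$, for which the problem reduces by a scaling argument to $D(\cdot)$. First I would translate so that $z = \mathbf 0$ and use the translation invariance of the kernel, so that $C_\lambda(z)$ becomes $\inf_{f\in\H,\,f(\mathbf 0)=1}\int f(x)^2 p(x+z)\,dx + \lambda\|f\|_\H^2$. The key scaling observation is that for the \emph{constant} density $p(z)$, the optimization $\inf_{f(\mathbf 0)=1}\int f^2(x) p(z)\,dx + \lambda\|f\|_\H^2$ equals $p(z)\big(\inf_{f(\mathbf 0)=1}\int f^2(x)\,dx + \tfrac{\lambda}{p(z)}\|f\|_\H^2\big) = p(z)\,D(\lambda/p(z))$, with near-optimizer $f_{\lambda/p(z)}$ from Lemma~\ref{lem:surrogateSolution}.

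The two directions of the asymptotic equivalence are then handled separately. For the \textbf{upper bound} $C_\lambda(z) \lesssim p(z) D(\lambda/p(z))(1+o(1))$, I would plug the explicit surrogate solution $f = f_{\lambda/p(z)}$ (rescaled so $f(\mathbf 0)=1$) into the objective for $C_\lambda(z)$. The term $\lambda\|f\|_\H^2$ and the bulk of $\int_{\|x\|\le \varepsilon(\lambda)} f^2(x)p(x+z)\,dx$ are controlled using continuity of $p$ at $z$ (so $p(x+z) = p(z) + o(1)$ uniformly on the shrinking ball $\|x\|\le\varepsilon(\lambda)$), while the tail $\int_{\|x\|\ge\varepsilon(\lambda)} f^2(x) p(x+z)\,dx$ is bounded by $\|p\|_\infty \int_{\|x\|\ge\varepsilon(\lambda)} f^2_{\lambda/p(z)}(x)\,dx$, which is $o\big(\tfrac{\lambda}{p(z)} D(\lambda/p(z))\big)$ by Assumption~\ref{ass:kernelAssumption3} applied with the rescaled $\lambda$. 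One must check Assumption~\ref{ass:kernelAssumption3} is stable under the substitution $\lambda\mapsto\lambda/p(z)$, which is immediate since $p(z)$ is a fixed positive constant. Collecting terms gives the objective value $\le p(z)D(\lambda/p(z)) + o(\lambda D(\lambda/p(z))/p(z)\cdot p(z)) = p(z)D(\lambda/p(z))(1+o(1))$, using $\lambda D(\lambda) = o(D(\lambda))$-type comparisons via Remark~\ref{rem:convTo0}.

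For the \textbf{lower bound} $C_\lambda(z) \gtrsim p(z) D(\lambda/p(z))(1+o(1))$, I would take any feasible $f$ (with $f(\mathbf 0)=1$) for $C_\lambda(z)$ and bound its objective from below. Restricting the spatial integral to $\|x\|\le\varepsilon(\lambda)$ and using $p(x+z)\ge p(z) - o(1)$ there, we get $\int f^2 p(\cdot+z) + \lambda\|f\|_\H^2 \ge (p(z)-o(1))\int_{\|x\|\le\varepsilon(\lambda)} f^2(x)\,dx + \lambda\|f\|_\H^2$. The subtlety is that this is not quite $(p(z)-o(1))\int_{\rb^d} f^2 + \lambda\|f\|_\H^2$, since the missing tail $\int_{\|x\|\ge\varepsilon(\lambda)} f^2$ could a priori be large; the fix is to argue (using Remark~\ref{rem:transformInL1} and the $\lambda\|f\|_\H^2$ term, or a truncation/comparison argument) that any $f$ with small objective must have its $L^2$ mass essentially concentrated near the origin, so the tail contributes negligibly relative to $D(\lambda/p(z))$ — this is exactly the role Assumption~\ref{ass:kernelAssumption3} plays, transferred to competitors via the optimality characterization. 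Then $(p(z)-o(1))\int_{\rb^d} f^2 + \lambda\|f\|_\H^2 \ge (p(z)-o(1))\big(\int_{\rb^d} f^2 + \tfrac{\lambda}{p(z)-o(1)}\|f\|_\H^2\big) \ge (p(z)-o(1)) D\big(\tfrac{\lambda}{p(z)-o(1)}\big)$ by definition of $D$, and continuity/monotonicity of $D$ finishes it. I expect the \textbf{main obstacle} to be precisely this lower-bound tail control: showing that an arbitrary near-minimizer of $C_\lambda(z)$ cannot cheat by placing $L^2$ mass far from $z$ where $p$ might be small, which requires leveraging Assumption~\ref{ass:kernelAssumption3} not just for the explicit $f_\lambda$ but structurally, likely via the variational/Euler–Lagrange characterization of the optimum and the fact that $g_\lambda$ concentrates.
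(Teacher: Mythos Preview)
Your upper bound is essentially the paper's argument and is fine. The gap is in the lower bound, precisely where you flag it as the ``main obstacle''.

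After restricting the spatial integral to the ball $\|x\|\le\varepsilon(\lambda)$ and using continuity of $p$, you try to extend the integral back to all of $\rb^d$ by arguing that any near-minimizer $f$ must have negligible $L^2$-tail outside the ball. This step is not justified by the tools at hand. Assumption~\ref{ass:kernelAssumption3} concerns only the explicit $f_\lambda$, and the bounds you invoke (Remark~\ref{rem:transformInL1}, the $\lambda\|f\|_\H^2$ term) only yield $\|f\|_{L^2}^2 \le \|\hat q\|_\infty \|f\|_\H^2 \lesssim D(\lambda)/\lambda$, which diverges and gives no tail control of order $o(D(\lambda))$. There is no evident mechanism to transfer the concentration of $f_\lambda$ to an arbitrary competitor or near-minimizer.

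The paper circumvents this entirely. Instead of extending the restricted integral back to $\rb^d$, it directly compares the two optimal values: setting $\tilde D_\epsilon(\lambda) = \inf_{f\in\H_0}\int_{\|x\|\le\epsilon} f^2 + \lambda\|f\|_\H^2$, it uses the closed-form expressions $D(\lambda)^{-1} = \langle K_0,(\Sigma+\lambda I)^{-1}K_0\rangle_\H$ and $\tilde D_\epsilon(\lambda)^{-1} = \langle K_0,(M_\epsilon+\lambda I)^{-1}K_0\rangle_\H$ (Lemma~\ref{lem:quadProgConstr}) to write
\[
D(\lambda)^{-1} - \tilde D_\epsilon(\lambda)^{-1} \;=\; \int_{\|x\|\ge\epsilon} g_\lambda(x)\,\tilde g_\lambda(x)\,dx,
\]
and then Cauchy--Schwarz splits this into $\big(\int_{\|x\|\ge\epsilon} g_\lambda^2\big)^{1/2}$, which is small by Assumption~\ref{ass:kernelAssumption3}, times $\big(\int_{\rb^d}\tilde g_\lambda^2\big)^{1/2}$, which needs only the crude bound $\|\tilde g_\lambda\|_{L^2}^2 \le \|\hat q\|_\infty \|\tilde g_\lambda\|_\H^2 \le \|\hat q\|_\infty/(\lambda\,\tilde D_\epsilon(\lambda))$. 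The point is that concentration is required only of the \emph{explicit} $g_\lambda$; the minimizer $\tilde g_\lambda$ of the restricted problem needs only a global $L^2$ bound. Your ``Euler--Lagrange'' intuition points in this direction, but the argument you actually wrote (tail control of competitors) is not the one that works.
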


\paragraph{Proof sketch.} The equivalent is shown by using the variational formulation in Eq.~(\ref{eq:christoffel}). A natural candidate for the optimal function $f$ is the optimizer obtained from Lebesgue measure in Eq.~(\ref{eq:mainSubProblem}), scaled by $p(z)$. Together with Assumption 2, this leads to the desired upper bound. In order to obtain the corresponding lower bound, we consider Lebesgue measure restricted to a small ball around $z$. Using linear algebra and expansions of operator inverses, we relate the optimal value directly to the optimal value $D(\lambda)$ of Eq.~(\ref{eq:mainSubProblem}).

This result is complemented by the following which describes the asymptotic behavior outside the support of $p$, the proof is given in Appendix~\ref{sec:proofTh2}.
\begin{theorem}
				Let $q$, $k$ and $p$ be given as in Theorem \ref{th:mainResult}. Then, for any $z \in \rb^d$, such that there exists $\epsilon > 0$ with $\int_{\|z - x\| \leq \epsilon} p(x)dx = 0$, we have 
				\begin{align*}
					\textrm{(i)} \quad\quad\quad C_\lambda(z) \quad\underset{\lambda\to 0, \,\lambda > 0}{=}\quad O(\sqrt{\lambda}D(\sqrt{\lambda})).
				\end{align*}
               If furthermore there exists $a \geq 0$ and $c > 0$ such that, for any $\omega \in \rb^d$, $\hat{q}(\omega) \geq \frac{c}{1 + \|\omega\|^a}$
               , then, for any such $z \in \rb^d$, we have
               \begin{align*}
					\textrm{(ii)} \quad\quad\quad C_\lambda(z) \quad\underset{\lambda\to 0, \,\lambda > 0}{=}\quad O(\lambda).
				\end{align*}
                \label{th:mainResult2}
\end{theorem}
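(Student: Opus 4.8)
The plan is to produce an explicit feasible function for the variational problem (\ref{eq:christoffel}) at such a point $z$ and read off the upper bound from its objective value; since $C_\lambda(z)$ is an infimum, any feasible $f$ gives an upper bound. The natural candidate is the optimizer $f_\mu$ from Lemma \ref{lem:surrogateSolution} (the Lebesgue-measure Christoffel optimizer), translated so that it is centered at $z$, and with regularization level $\mu$ chosen as a suitable power of $\lambda$ to be optimized at the end. Write $\tilde f(x) = f_\mu(x - z)$, so $\tilde f(z) = 1$ and $\tilde f$ is feasible. Its objective in (\ref{eq:christoffel}) is $\int \tilde f(x)^2 p(x)\,dx + \lambda \|\tilde f\|_{\H}^2$. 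The key point is that the mass of $p$ lives at distance at least $\epsilon$ from $z$, so $\int \tilde f(x)^2 p(x)\,dx = \int_{\|x-z\|\ge \epsilon} f_\mu(x-z)^2 p(x)\,dx \le \|p\|_{L^\infty} \int_{\|y\|\ge\epsilon} f_\mu(y)^2\,dy$, which by Assumption \ref{ass:kernelAssumption3} is $o(\mu D(\mu))$ as $\mu\to 0$ (using that $\varepsilon(\mu)\to 0$, so eventually $\varepsilon(\mu) < \epsilon$). Meanwhile $\lambda \|\tilde f\|_{\H}^2 = \lambda \|f_\mu\|_{\H}^2$, and from Lemma \ref{lem:surrogateSolution} one computes $\|f_\mu\|_{\H}^2 = (D(\mu) - \mu\|f_\mu\|_{L^2}^2)/\mu \cdot(\dots)$ — more directly, the objective of $f_\mu$ in (\ref{eq:mainSubProblem}) equals $D(\mu) = \|f_\mu\|_{L^2}^2 + \mu\|f_\mu\|_{\H}^2$, so $\mu\|f_\mu\|_{\H}^2 \le D(\mu)$ and hence $\|f_\mu\|_{\H}^2 \le D(\mu)/\mu$. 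Therefore the objective is at most $o(\mu D(\mu)) + \lambda D(\mu)/\mu$.

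For part (i), choose $\mu = \sqrt\lambda$. Then $\lambda D(\mu)/\mu = \sqrt\lambda D(\sqrt\lambda)$, and since $D(\mu)\to 0$ (Remark \ref{rem:convTo0}) while $D(\mu)\ge (2\pi)^d\mu/q(0)$, the term $o(\mu D(\mu)) = o(\sqrt\lambda D(\sqrt\lambda))$ is dominated. This gives $C_\lambda(z) = O(\sqrt\lambda D(\sqrt\lambda))$ as claimed. For part (ii), the extra lower bound $\hat q(\omega)\ge c/(1+\|\omega\|^a)$ controls how fast $D(\mu)$ can go to zero: plugging into the formula $D(\mu) = (2\pi)^d / \int \hat q/(\mu + \hat q)\,d\omega$, one has $\int \frac{\hat q(\omega)}{\mu + \hat q(\omega)}\,d\omega \ge \int \frac{1}{1 + \mu(1+\|\omega\|^a)/c}\,d\omega$, and a change of variables $\omega = \mu^{-1/a}\omega'$ shows this integral grows like $\mu^{-d/a}$ up to constants, so $D(\mu) = O(\mu^{d/a})$. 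Then $\lambda D(\mu)/\mu = O(\lambda \mu^{d/a - 1})$; since $d/a$ could be $\ge 1$ or $<1$, if $d/a \ge 1$ we may even send $\mu\to$ a constant... more carefully, we optimize: picking $\mu$ a small fixed constant (or letting $\mu\to 0$ slowly) makes $\lambda D(\mu)/\mu = O(\lambda)$ and the $o(\mu D(\mu))$ term is then $O(\lambda)$ as well once $\mu$ is taken, say, $\mu = \lambda^{\theta}$ for appropriate $\theta\in(0,1]$ balancing the two pieces; the cleanest choice is to verify that $\mu = \sqrt\lambda$ already gives $\lambda D(\sqrt\lambda)/\sqrt\lambda = \sqrt\lambda D(\sqrt\lambda) = O(\sqrt\lambda \cdot \lambda^{d/(2a)}) = O(\lambda^{1/2 + d/(2a)})$, and then iterate/optimize $\theta$ to drive the exponent up to $1$, which is the best the $\lambda\|f\|_\H^2$ term permits.

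The main obstacle I anticipate is the bookkeeping in part (ii): getting the sharp $O(\lambda)$ rather than a suboptimal power requires choosing the free parameter $\mu$ correctly and checking that, under the polynomial lower bound on $\hat q$, the term $\lambda \|f_\mu\|_\H^2$ can genuinely be pushed down to order $\lambda$ — equivalently that $\|f_\mu\|_\H$ can be kept bounded (or growing sub-polynomially) while $\int_{\|y\|\ge\epsilon} f_\mu^2$ still vanishes fast enough; this is exactly where Assumption \ref{ass:kernelAssumption3} and the decay rate of $\hat q$ interact, and one must confirm they are compatible for the relevant range of $\mu$. The part (i) bound, by contrast, is a fairly direct consequence of Assumption \ref{ass:kernelAssumption3} with the single choice $\mu=\sqrt\lambda$ and should go through with only routine estimates.
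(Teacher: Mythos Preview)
Your argument for part (i) is correct and is exactly the paper's proof: use the translated Lebesgue optimizer $f_\mu(\cdot - z)$ as test function, bound the data term by $\|p\|_{L^\infty}\int_{\|y\|\geq\epsilon}f_\mu(y)^2\,dy = o(\mu D(\mu))$ via Assumption~\ref{ass:kernelAssumption3}, bound the penalty by $\lambda D(\mu)/\mu$, and set $\mu=\sqrt{\lambda}$.

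For part (ii), however, there is a genuine gap, and you have correctly sensed it in your ``main obstacle'' paragraph without resolving it. The $f_\mu$ family cannot reach $O(\lambda)$. To make the penalty term $\lambda\|f_\mu\|_\H^2 \leq \lambda D(\mu)/\mu$ of order $\lambda$ you would need $D(\mu)/\mu$ bounded, i.e., $\mu$ bounded away from $0$; but for any fixed $\mu>0$ the data term $\|p\|_{L^\infty}\int_{\|y\|\geq\epsilon}f_\mu(y)^2\,dy$ is a fixed strictly positive constant (the function $f_\mu$ is not compactly supported), so the objective does not go to $0$ at all, let alone like $\lambda$. Conversely, driving the data term to zero forces $\mu\to 0$, and then $D(\mu)/\mu\to\infty$ (since $\lambda=o(D(\lambda))$, cf.\ the remark after Theorem~\ref{th:mainResult2}), so the penalty term is strictly larger than order $\lambda$. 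No choice of $\mu=\lambda^\theta$ balances both terms down to $O(\lambda)$.

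The paper abandons the $f_\mu$ family for part (ii) and uses a different test function: a fixed $C^\infty$ bump $f_\epsilon$ with $f_\epsilon(z)=1$ and $f_\epsilon\equiv 0$ outside the ball of radius $\epsilon$ around $z$. Then the data term is \emph{exactly} zero (the support of $f_\epsilon$ misses the support of $p$), and the penalty is $\lambda\|f_\epsilon\|_\H^2$ with $\|f_\epsilon\|_\H$ a finite constant independent of $\lambda$. The only thing to check is $\|f_\epsilon\|_\H<\infty$, and this is precisely the role of the extra hypothesis $\hat q(\omega)\geq c/(1+\|\omega\|^a)$: since $f_\epsilon\in C_c^\infty$, its Fourier transform decays faster than any polynomial, so $\int |\hat f_\epsilon(\omega)|^2/\hat q(\omega)\,d\omega \leq c^{-1}\int |\hat f_\epsilon(\omega)|^2 (1+\|\omega\|^a)\,d\omega<\infty$. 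Your attempt to use the lower bound on $\hat q$ to control $D(\mu)$ is not what that hypothesis is for; it is there to guarantee that compactly supported smooth functions belong to $\H$.
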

\paragraph{Proof sketch.} Since only an upper-bound is needed, we simply have to propose a candidate function for $f$, and we build one from the solution of Eq.~(\ref{eq:mainSubProblem}) for (i) and directly from properties of kernels for~(ii).

\begin{remark} Theorems \ref{th:mainResult} and \ref{th:mainResult2} underline separation between the ``inside'' and the ``outside'' of the support of $p$ and describes the fact that the convergence to $0$ as $\lambda$ decreases is faster outside:
\textit{(i)}, if $\log(D(\lambda)) = \alpha \log(\lambda) + o(1)$ with $\alpha < 1$ (which is the case in most interesting situations), then $C_\lambda(z)= O(\sqrt{\lambda} D(\sqrt{\lambda})) = o(D(\lambda))$. \textit{(ii)}, it holds that $\lambda = o(D(\lambda))$. Hence in most cases, the values of the Christoffel function outside of the support of $p$ are negligible compared to the ones inside the support of $p$. 
\end{remark}

Combining Theorem \ref{th:mainResult} and \ref{th:mainResult2} does not describe what happens in the limit case where neither of the conditions on $z$ hold, for example on the boundary of the support or at discontinuity points of the density. We expect that this highly depends on the geometry of $p$ and its support. In the polynomial case on the simplex, the rate depends on the dimension of the largest face containing the point of interest \cite{xu1999asymptoticsSimplex}. Settling down this question in the RKHS setting is left for future research.

\subsection{A general construction}
\label{sec:examples}
We describe a class of kernels for which Assumptions \ref{ass:kernelAssumption} and \ref{ass:kernelAssumption3} hold, and Theorem \ref{th:mainResult} can be applied, which includes Sobolev spaces. We also compute explicit equivalents for $D(\cdot)$ in (\ref{eq:mainSubProblem}).
We first introduce a definition and an assumption.
\begin{definition}
				\label{def:2Spositive}
				For any $s \in \nb^*$, a $d$-variate polynomial $P$ of degree $2s$ is called $2s$-positive if it satisfies the following.
				\begin{itemize}
								\item Let $Q$ denote the $2s$-homogeneous part of $P$ (the sum of its monomial of degree $2s$). $Q$ is (strictly) positive on the unit sphere in $\rb^d$.
								\item The polynomial $R = P-Q$ satisfies $R(x) \geq 1$ for all $x \in \rb^d$.
				\end{itemize}
\end{definition}
\begin{remark}
				If $P$ is $2s$-positive, then it is always greater than $1$ and its $2s$-homogeneous part is strictly positive except at the origin. The positivity of $Q$ forbids the use of polynomial $P$ of the form $\prod_{i=1}^d (1 + w_i^2)$ which would allow to treat product kernels. Indeed, this would lead to $Q(\omega) = \prod_{i=1}^d w_{i}^2$ which is not positive on the unit sphere. The last condition on $R$ is not very restrictive as it can be ensured by a proper rescaling of $P$ if we have $R > 0$ only.
\end{remark}
\begin{assumption}
				Let $P$ be a $2s$-positive, $d$-variate polynomial and let $\gamma \geq 1$ be such that $2s\gamma > d$. The kernel $k$ is given as in Assumption \ref{ass:kernelAssumption} with $\hat{q} = \frac{1}{P^\gamma}$.
				\label{ass:kernelAssumption4}
\end{assumption}
One can check that $q$ in Assumption \ref{ass:kernelAssumption4} is well defined and satisfies Assumption \ref{ass:kernelAssumption}.
A famous example of such a kernel is the Laplace kernel $(x,y) \mapsto e^{-\|x - y\|}$ which amounts, up to a rescaling, to choose $P$ of the form $1 + a\|\cdot\|^2$ for $a > 0$ and $\gamma = \frac{d+1}{2}$. In addition, Assumption \ref{ass:kernelAssumption4} allows to capture the usual multi-dimensional Sobolev space of functions with square integrable partial derivatives up to order $s$, with $s > 2/d$, and the corresponding norm. 
We now provide the main result of this section. 
\begin{lemma}
				Assume that $p$ and $k$ are given as in Assumption \ref{ass:kernelAssumption} and \ref{ass:kernelAssumption4}. Then Assumption \ref{ass:kernelAssumption3} is satisfied. More precisely, set $q_0 = \frac{1}{(2\pi)^d}\int_{\rb^d} \frac{1}{1 + Q(\omega)^\gamma}d\omega$ and $p = \lceil s\gamma \rceil$, then for any $l < \big(1 - \frac{d}{2s\gamma}\big)/(8p)$ the following holds true as $\lambda \to 0$, $\lambda > 0$ :
				\begin{align*}
								\text{(i) }\quad&D(\lambda) \sim \frac{\lambda^{\frac{d}{2s\gamma}}}{q_0},\quad\quad\quad\quad\quad\quad
								\text{(ii)}\quad\int_{\|x\| \geq \lambda^l} f_\lambda^2(x)dx = o\left( \lambda D(\lambda)\right).	
				\end{align*}
				\label{lem:laplaceKernel}
\end{lemma}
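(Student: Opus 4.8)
\textbf{Proof proposal for Lemma~\ref{lem:laplaceKernel}.}

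The plan is to reduce everything to precise control of the integral $\int_{\rb^d} \frac{\hat q(\omega)}{\lambda + \hat q(\omega)}\,d\omega = \int_{\rb^d} \frac{1}{1 + \lambda P(\omega)^\gamma}\,d\omega$ and of the inverse Fourier transform $f_\lambda$. For part~(i), I would start from the exact formula $D(\lambda) = (2\pi)^d / \int_{\rb^d} (1+\lambda P^\gamma)^{-1} d\omega$ of Lemma~\ref{lem:surrogateSolution} and perform the change of variables $\omega = \lambda^{-1/(2s\gamma)} u$. Since $P$ is $2s$-positive, $P(\omega) = Q(\omega) + R(\omega)$ with $Q$ the $2s$-homogeneous part (so $Q(\lambda^{-1/(2s\gamma)}u) = \lambda^{-1/\gamma} Q(u)$) and $R$ of degree $<2s$, one gets $\lambda P(\lambda^{-1/(2s\gamma)}u)^\gamma = (Q(u) + \lambda^{1/(s\gamma)}\cdot(\text{lower order}))^\gamma \to Q(u)^\gamma$ pointwise. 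The Jacobian produces a factor $\lambda^{-d/(2s\gamma)}$, so $\int_{\rb^d}(1+\lambda P^\gamma)^{-1}d\omega \sim \lambda^{-d/(2s\gamma)} \int_{\rb^d}(1+Q(u)^\gamma)^{-1}du$. Dominated convergence justifies the limit: the integrand is bounded by $(1 + c\|u\|^{2s\gamma})^{-1}$ uniformly in $\lambda$ small (using strict positivity of $Q$ on the sphere and homogeneity to get $Q(u) \geq c\|u\|^{2s}$ for $\|u\|$ large, and $P^\gamma \geq 1$ everywhere for the bounded region), and $2s\gamma > d$ makes this integrable. This yields $D(\lambda)\sim (2\pi)^d \lambda^{d/(2s\gamma)} / \big((2\pi)^d q_0\big) = \lambda^{d/(2s\gamma)}/q_0$.

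For part~(ii), the point is that $g_\lambda$, the inverse Fourier transform of $\frac{\hat q}{\lambda + \hat q} = \frac{1}{1+\lambda P^\gamma}$, concentrates near the origin at scale $\lambda^{1/(2s\gamma)}$, and $\lambda^l$ with $l < \big(1-\frac{d}{2s\gamma}\big)/(8p)$ sits well outside that scale. I would bound $\int_{\|x\|\geq \lambda^l} f_\lambda^2(x)\,dx = D(\lambda)^2 \int_{\|x\|\geq \lambda^l} g_\lambda^2(x)\,dx$ by exploiting smoothness of $\frac{1}{1+\lambda P^\gamma}$: each spatial factor $x_j^{2m}$ in $\|x\|^{2m}g_\lambda(x)^2$ corresponds, via Parseval, to derivatives $\partial_\omega^{2m}$ of the symbol, and since $\lceil s\gamma\rceil = p$, the symbol $(1+\lambda P^\gamma)^{-1}$ and enough of its derivatives decay like $\lambda\cdot\|\omega\|^{-\text{(something)}}$ away from a $\lambda^{-1/(2s\gamma)}$-ball — more precisely one shows $\int_{\rb^d}\|x\|^{2m} g_\lambda(x)^2\,dx = O(\lambda^{\beta(m)})$ for a suitable exponent, then Markov's inequality gives $\int_{\|x\|\geq\lambda^l}g_\lambda^2 \leq \lambda^{-2ml}\int \|x\|^{2m}g_\lambda^2$. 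Choosing $m$ large enough (of order $p$, which is where the $8p$ in the constraint on $l$ enters) makes $D(\lambda)^2 \lambda^{-2ml}\cdot O(\lambda^{\beta(m)}) = o(\lambda D(\lambda))$, using $D(\lambda)\sim \lambda^{d/(2s\gamma)}/q_0$ from part~(i) to convert the target $o(\lambda D(\lambda))$ into a concrete power of $\lambda$.

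The main obstacle is the bookkeeping in part~(ii): getting an honest, quantitative decay estimate for the tail $\int_{\|x\|\geq \lambda^l} g_\lambda^2$ in terms of a power of $\lambda$, since this requires differentiating $(1+\lambda P^\gamma)^{-\gamma}$ — wait, $(1+\lambda P^\gamma)^{-1}$ — up to order roughly $2p$, controlling the resulting rational functions (whose numerators are polynomials of controlled degree and whose denominators are powers of $1+\lambda P^\gamma$), and integrating their squares against the $\lambda$-dependent weight. One must track how many derivatives are "affordable" before the integrand ceases to be $L^2$, which is exactly why the threshold on $l$ degrades with $p=\lceil s\gamma\rceil$. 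The homogeneity rescaling $\omega = \lambda^{-1/(2s\gamma)}u$ again organizes this: after rescaling, each $\omega$-derivative costs a factor $\lambda^{1/(2s\gamma)}$, and one needs the net power of $\lambda$ (derivatives gained minus Jacobian minus the $\lambda$ in the numerator of $\frac{\lambda P^\gamma}{1+\lambda P^\gamma}$ that appears after noting $g_\lambda = 1 - \frac{\lambda P^\gamma}{1+\lambda P^\gamma}$ in Fourier, so $g_\lambda - \delta_0$ has a symbol with a clean factor of $\lambda$) to beat $\lambda^{-2ml}$. Once the exponent arithmetic is pinned down, the rest is dominated convergence and Markov's inequality. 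Everything else — checking $q$ is well-defined and satisfies Assumption~\ref{ass:kernelAssumption}, and that $\varepsilon(\lambda)=\lambda^l$ witnesses Assumption~\ref{ass:kernelAssumption3} — is immediate from~(i) and~(ii).
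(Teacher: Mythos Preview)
Your plan for part~(i) is essentially the paper's: both perform the change of variables $\omega=\lambda^{-1/(2s\gamma)}u$ and identify the limit integrand $(1+Q(u)^\gamma)^{-1}$. The paper goes further and extracts an explicit remainder of order $\lambda^{(1-d)/(2s\gamma)}$ via a Taylor--Lagrange estimate and a polynomial comparison $R\leq M(1+Q^{(2s-1)/(2s)})$, whereas you invoke dominated convergence; your dominating function $(1+c\|u\|^{2s\gamma})^{-1}$ is legitimate since $R\geq 1$ forces $\lambda^{1/\gamma}R(\lambda^{-1/(2s\gamma)}u)\geq 0$, so the rescaled denominator is at least $1+Q(u)^\gamma$. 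Either route proves~(i).

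For part~(ii) your route genuinely differs from the paper's. The paper obtains a \emph{pointwise} bound: from the $L^1$ norm of $\partial_{\omega_1}^{2p}\hat g_\lambda$ it gets $\sup_x|x_1^{2p}f_\lambda(x)|\leq N\lambda$, hence $\|x\|^{4p}f_\lambda(x)^2\leq C\lambda^2$, and then integrates against the weight $\|x\|^{4p}/(1+\|x\|^{6p})$ (integrable since $2p>d$) to arrive at $\int_{\|x\|\geq\lambda^l}f_\lambda^2\leq M_2\lambda^{2-8pl}$. You instead propose an $L^2$ moment bound via Parseval and then Markov's inequality. Both routes rest on the same key estimate, namely that $|\partial_{\omega_1}^{2p}\hat g_\lambda|\leq N\lambda\,\hat g_\lambda$; the paper proves this through a chain of Fa\`a~di~Bruno computations (its Lemmas on $\partial^m P^\gamma$ and $\partial^n(1+\lambda P^\gamma)^{-1}$), and this is precisely the ``bookkeeping'' you flag as the obstacle. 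If you feed that estimate into your $L^2$ scheme you get $\int x_1^{4p}g_\lambda^2\leq C\lambda^2\int\hat g_\lambda^2=O(\lambda^{2-d/(2s\gamma)})$ and hence, after Markov and multiplying by $D(\lambda)^2$, $\int_{\|x\|\geq\lambda^l}f_\lambda^2=O(\lambda^{2+d/(2s\gamma)-4pl})$, which is $o(\lambda D(\lambda))$ whenever $l<1/(4p)$ --- a \emph{larger} range than the paper's $(1-\tfrac{d}{2s\gamma})/(8p)$. So your approach is not only valid but slightly sharper; your remark that ``the $8p$ enters'' through the choice of $m$ is off, since with the natural choice $m=2p$ the Parseval route produces a $4p$ in the denominator. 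The rescaling heuristic you sketch (``each $\omega$-derivative costs $\lambda^{1/(2s\gamma)}$'') is a useful sanity check but is not how either argument is actually executed: the clean factor of $\lambda$ comes from the derivative bound $|\partial^{2p}\hat g_\lambda|\leq N\lambda\hat g_\lambda$, which you should plan to prove directly (Fa\`a~di~Bruno plus the observation that for $n\in[2s\gamma,4s\gamma)$ the worst term is the $k=1$ piece, already carrying a factor~$\lambda$).
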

\begin{remark}
				If $Q\colon \omega \mapsto \|\omega\|^{2s}$, using spherical coordinate integration, we obtain
				\begin{align*}
								q_0 &= \frac{1}{(2\pi)^d}\int_{\rb^d} \frac{1}{1 + Q(\omega)^\gamma}d\omega
								= \frac{1}{2^{d-1}\pi^{\frac{d}{2}} \Gamma\left( \frac{d}{2} \right)}  \int_0^{+\infty} \frac{r^{d-1}}{1 + r^{2s\gamma}}d\omega
								= \frac{1}{2^{d-1}\pi^{\frac{d}{2}} \Gamma\left( \frac{d}{2} \right)}  \frac{\pi}{2s\gamma \sin\left( \frac{d \pi}{2s\gamma} \right)}.
				\end{align*}

				\label{rem:sphericalCoordinate}
\end{remark}
The proof is presented in Appendix~\ref{sec:proofsExample}. We have the following corollary which is a direct application of Theorem \ref{th:mainResult}. It explicits the asymptotic for the Christoffel function, in terms of the density $p$.
\begin{corollary}
				Assume that $p$ and $k$ are given as in Assumption \ref{ass:kernelAssumption} and \ref{ass:kernelAssumption4} and that $z \in \rb^d$ is such that $p(z)>0$ and $p$ is continuous at $z$. Then as $\lambda \to 0$, $\lambda > 0$,
				\begin{align*}
								C_\lambda(z) \sim \lambda^{\frac{d}{2s\gamma}} p(z)^{1 - \frac{d}{2s\gamma}} \textstyle\frac{1}{q_0}.
				\end{align*}
				\label{cor:convergenceRate}
\end{corollary}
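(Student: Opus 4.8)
# Proof Proposal for Corollary~\ref{cor:convergenceRate}

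The plan is to simply compose Theorem~\ref{th:mainResult} with the asymptotic equivalent for $D(\cdot)$ provided by Lemma~\ref{lem:laplaceKernel}(i). Under Assumptions~\ref{ass:kernelAssumption} and~\ref{ass:kernelAssumption4}, Lemma~\ref{lem:laplaceKernel} guarantees that Assumption~\ref{ass:kernelAssumption3} holds, so Theorem~\ref{th:mainResult} applies at any point $z$ with $p(z) > 0$ at which $p$ is continuous, yielding
\begin{align*}
C_\lambda(z) \quad\underset{\lambda\to 0,\,\lambda>0}{\sim}\quad p(z)\, D\!\left(\frac{\lambda}{p(z)}\right).
\end{align*}
First I would substitute the argument $\lambda/p(z)$ into the equivalent $D(\mu) \sim \mu^{d/(2s\gamma)}/q_0$ from Lemma~\ref{lem:laplaceKernel}(i). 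The one point requiring a moment's care is that this equivalent is stated as $\mu \to 0$, so I need to observe that since $p(z)$ is a fixed strictly positive constant, $\mu = \lambda/p(z) \to 0$ exactly when $\lambda \to 0$; hence $D(\lambda/p(z)) \sim (\lambda/p(z))^{d/(2s\gamma)}/q_0$ as $\lambda \to 0$.

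Next I would multiply through by $p(z)$ and collect the powers of $p(z)$:
\begin{align*}
p(z)\, D\!\left(\frac{\lambda}{p(z)}\right) \;\sim\; p(z)\cdot \frac{1}{q_0}\left(\frac{\lambda}{p(z)}\right)^{\frac{d}{2s\gamma}} \;=\; \frac{1}{q_0}\,\lambda^{\frac{d}{2s\gamma}}\, p(z)^{1 - \frac{d}{2s\gamma}}.
\end{align*}
Finally, I would invoke transitivity of the asymptotic equivalence relation $\sim$ (composing the two equivalences above) to conclude
\begin{align*}
C_\lambda(z) \quad\underset{\lambda\to 0,\,\lambda>0}{\sim}\quad \lambda^{\frac{d}{2s\gamma}}\, p(z)^{1 - \frac{d}{2s\gamma}}\, \frac{1}{q_0},
\end{align*}
which is exactly the claimed statement.

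There is essentially no obstacle here: all the substantive work has been done in Theorem~\ref{th:mainResult} (the variational upper and lower bounds) and in Lemma~\ref{lem:laplaceKernel} (verifying Assumption~\ref{ass:kernelAssumption3} and computing the explicit rate for $D$). The only things to be slightly careful about are that $\sim$ composes well under multiplication by the fixed positive constant $p(z)$ and under the change of variable $\lambda \mapsto \lambda/p(z)$, both of which are immediate since $p(z)$ does not depend on $\lambda$. Thus the corollary is a one-line consequence, and the proof in Appendix~\ref{sec:proofsExample} should read accordingly.
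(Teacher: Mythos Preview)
Your proposal is correct and matches the paper's treatment exactly: the paper simply states that the corollary ``is a direct application of Theorem~\ref{th:mainResult}'' and provides no separate proof, relying (as you do) on Lemma~\ref{lem:laplaceKernel} to supply Assumption~\ref{ass:kernelAssumption3} and the explicit equivalent for $D(\cdot)$. Your written argument spells out the one-line composition the paper leaves implicit.
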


\section{Numerical illustration}
\label{sec:numerics}
In this section we provide numerical evidence confirming the rate described in Corollary \ref{cor:convergenceRate}. We use the Mat\'ern kernel, a parametric radial kernel allowing different values of $\gamma$ in Assumption \ref{ass:kernelAssumption4}.
\subsection{Mat\'ern kernel}
We follow the description of \cite[Section 4.2.1]{rasmussen2006gaussian}, note that the Fourier transform is normalized differently in our paper. For any $\nu > 0$ and $l > 0$, we let for any $x\in \rb^d$,
\begin{align}
				\label{eq:qMatern}
				q_{\nu,l}(x) & = \frac{2^{1-\nu}}{\Gamma(\nu)}\Big( \frac{\sqrt{2\nu} \|x\|}{l}\Big)^\nu  K_\nu\Big(  \frac{\sqrt{2\nu} \|x\|}{l}\Big),
\end{align}
where $K_\nu$ is the modified Bessel function of the second kind \cite[Section 9.6]{abramowitz1964handbook}. This choice of $q$ satisfies Assumption \ref{ass:kernelAssumption4}, with $s = 1$ and $\gamma = \nu +\frac{d}{2}$.  Indeed, for any $\nu,l > 0$, its Fourier transform is given for any $\omega \in \rb^d$
\begin{align}
				\label{eq:qMaternHat}
				\hat{q}_{\nu,l}(\omega) =  \frac{2^d \pi^{\frac{d}{2}} \Gamma\left( \nu + d / 2 \right) (2\nu)^\nu}{\Gamma(\nu)l^{2\nu}} \frac{1}{\left(\frac{2\nu}{l^2} + \|\omega\|^2\right)^{\nu + \frac{d}{2}}}.
\end{align}

\subsection{Empirical validation of the convergence rate estimate}

Corollary \ref{cor:convergenceRate} ensures that, given $\nu,l >0$ and $q$ in (\ref{eq:qMatern}), as $\lambda \to 0$, we have for appropriate $z$,
$C_\lambda(z) \sim \lambda^{\frac{d}{2\nu + d}} p(z)^{\frac{2\nu}{2\nu + d}} / q_0(\nu,l)$.
We use the Riemann sum plug-in approximation described in Section~\ref{sec:plugin} to illustrate this result numerically. We perform extensive investigations with compactly supported sinusoidal density in dimension 1. Note that from Remark \ref{rem:sphericalCoordinate} we have the closed form expression $q_0(\nu,l) = \Big(\frac{2^d \pi^{\frac{d}{2}} \Gamma\left( \nu + d / 2 \right) (2\nu)^\nu}{\Gamma(\nu)l^{2\nu}}  \Big)^{\frac{1}{2\nu + d}} \frac{1}{(2\nu+d) \sin\left( \frac{d\pi}{2\nu+d} \right)}$.

\begin{figure}[t]
				\centering
				\hspace*{-.75cm} \includegraphics[width=.6\textwidth]{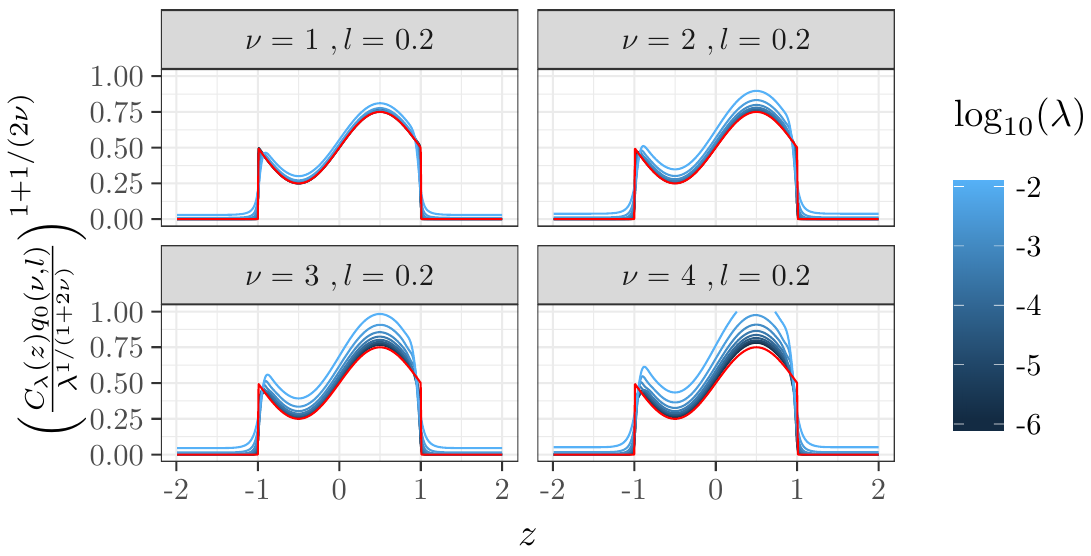}\quad\quad \raisebox{.0\height}{\includegraphics[width=.38\textwidth]{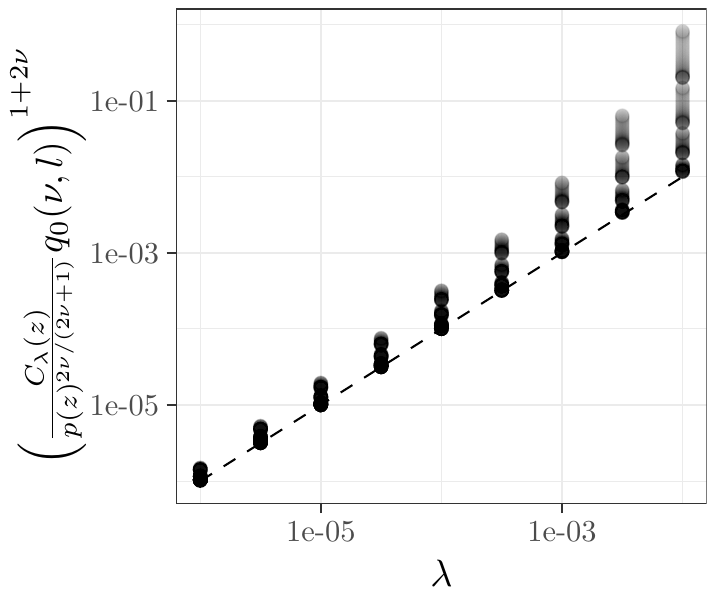}}
                
                \vspace*{-.24cm}
                
				\caption{The target density $p$ represented in red. We consider different choices of $\nu$ and $l$ for $q$ as in (\ref{eq:qMatern}). We use the Riemann sum plug-in approximation described in (\ref{eq:empiricalChristoffel}) with $n = 2000$. \emph{Left:} the fact that the estimate is close to the density is clear for small values of $\lambda$. \emph{Right:} the dotted line represents the identity. This suggests that the rate estimate is of the correct order in $\lambda$. }
				\label{fig:density1d}
\end{figure}

\paragraph{Relation with the density:} For a given choice of $\nu,l > 0$, as $\lambda \to 0$, we should obtain for appropriate $z$ that the quantity, $\left(\frac{C_{\lambda}(z)q_0(\nu,l)}{\lambda^{d/(d + 2 \nu )}}\right)^{ 1 + d/(2\nu)}$
is roughly equal to $p(z)$. This is confirmed numerically as presented in Figure \ref{fig:density1d} (left), for different choices of the parameters $\nu$.

\paragraph{Convergence rate:} For a given choice of $\nu,l > 0$, as $\lambda \to 0$, we should obtain for appropriate $z$ that the quantity $\left(\frac{C_{\lambda}(z)}{p(z)^{2\nu / (2 \nu + 1)}} q_0(\nu,l) \right)^{\frac{2\nu + d}{d}}$ is roughly equal to $\lambda$.
Considering the same experiment confirms this finding as presented in Figure \ref{fig:density1d} right, which suggests that the exponent in $\lambda$ is of the correct order.

\paragraph{Additional experiments:} A piecewise constant density is considered in Appendix \ref{sec:additional} which also contains simulations suggesting that the asymptotic has a different nature for the Gaussian kernel for which we conjecture that our result does not hold.

\section{Conclusion and future work}
We have introduced a notion of Christoffel function in RKHS settings. This allowed to derive precise asymptotic expansions for a quantity known as statistical leverage score which has a wide variety of applications in machine learning with kernel methods. Our main result states that the leverage score is  inversely proportional to a power of the population density at the considered point. This has intuitive meaning as leverage score is a measure of the contribution of a given observation to a statistical estimate. For densely populated region, a specific observation, which should have many close neighbors, has less effect on a statistical estimate than observations in less populated areas of space. Our observation gives a precise meaning to this statement and sheds new light on the relevance of the notion of leverage score. Furthermore, it is coherent with known results in the orthogonal polynomial literature from which the notion of Christoffel function was inspired.

Direct extensions of this work include approximation bounds for our proposed plug-in estimate and tuning of the regularization parameter $\lambda$. A related question is   the relevance of the proposed variational formulation for the statistical estimation of leverage scores when learning from random features, in particular random Fourier features and density/support estimation. Another line of future research would be the extension of our estimates to broader classes of RKHS, for example, kernels with product structure, such as the $\ell_1$ counterpart of the Laplace kernel. Finally, it would be interesting to extend the concepts to abstract topological spaces beyond $\rb^d$.

\subsubsection*{Acknowledgements}

We acknowledge support from the European Research Council (grant SEQUOIA 724063).

\bibliography{references}
\newpage
\appendix

This is the supplementary material for the paper: ``Relating Leverage Scores and Density using Regularized Christoffel Functions''.

\section{Additional properties and numerical simulations}
\label{sec:additional}
\paragraph{Monotonicity properties.}
It is obvious from the definition in (\ref{eq:christoffel}) that the regularized Christoffel function is an increasing function of $\lambda$, it is also concave. 
If $p$ and $\tilde{p}$ are as in Assumption \ref{ass:kernelAssumption}.2, 
for any $\lambda > 0$, $z \in \rb^d$
\begin{align*}
				C_{\lambda,p + \tilde{p},k}(z) = \inf_{ f \in \mathcal{H}_z} \|f\|_{L^2(p + \tilde{p})}^2 + \lambda \| f \|_{\mathcal{H}}^2 \geq \inf_{ f \in \mathcal{H}_z} \|f\|_{L^2(p)}^2 + \lambda \| f \|_{\mathcal{H}}^2  =  C_{\lambda,p,k}(z),
\end{align*}
that is, the regularized Christoffel function is an increasing function of the underlying density.

The Christoffel function is also monotonic with respect to kernel choice.  
For any two positive definite kernels $k$ and $k'$, we have for any $\lambda > 0$,
\begin{align*}
				C_{\lambda,p,k + k'} \leq C_{\lambda,p,k'},
\end{align*}
that is, the regularized Christoffel function is a decreasing function of the underlying kernel. Indeed, for any positive definite kernels $k$ and $k'$, denote by $\H$ the RKHS associated to $k$ and $\tilde{\H}$ the RKHS associated to $k + k'$. We have $\H \subset \tilde{\H}$ and $\|\cdot\|_{\tilde{\H}} \leq \|\cdot\|_\H$ \cite[Section 7, Theorem I]{aronszajn1950theory}.

\paragraph{Overfitting.}
We are interested in the asymptotic behavior of the Christoffel function as the regularization parameter $\lambda$ tends to $0$. This is approximated based on $n$ points using the plug-in approach in Section~\ref{sec:plugin}. For a fixed value of $n$, the empirical measure $d\rho_n$ is supported on only $n$ points and the asymptotic as $\lambda \to 0$ is straightforward. For example if Theorem \ref{th:mainResult2} \textit{(ii)} holds, then we obtain $O(\lambda)$ outside of the support and $\eta_i$ at each support point $x_i$, $i=1, \ldots,n$. This is because the quality of approximation of $p$ by $d\rho_n$ depends on the regularity of the corresponding test functions. Small values of the regularization parameter $\lambda$ allow to consider functions with very low regularity so that the approximation become vacuous and the obtained estimate only reflects the  finiteness of the support of $d\rho_{n}$. This phenomenon is illustrated in Figure \ref{fig:overfitting}. Hence, when using the proposed plug-in approach, it is fundamental to carefully tune the considered value of $\lambda$ as a function of $n$. Theoretical guidelines for measuring this trade-off are left for future research, in our  experiments, this is done on an empirical basis (we prove below a loose sufficient condition, where $\lambda^2 n^{1/2}$ has to be large).
\begin{figure}[ht]
				\centering
				\includegraphics[width=.7\textwidth]{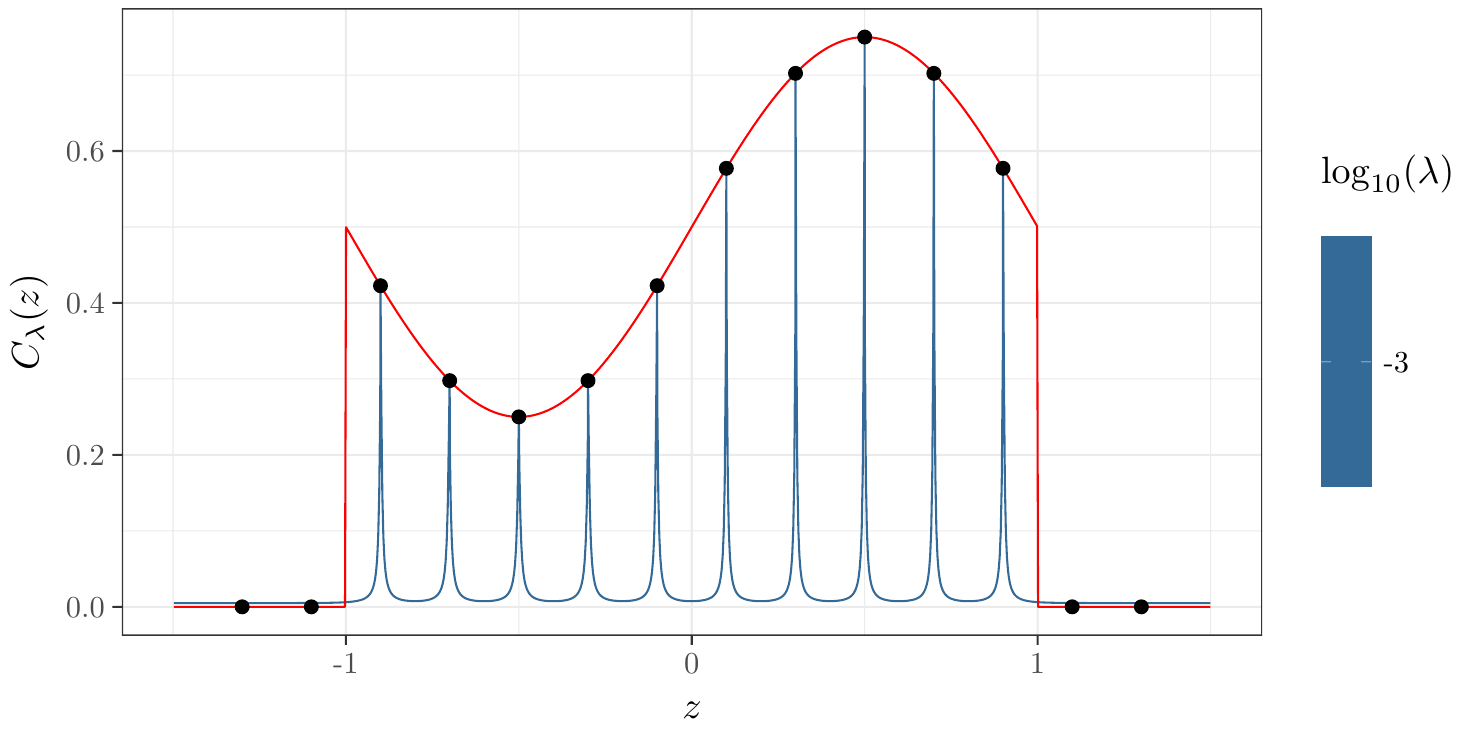}
                \vspace*{-.2cm}
                
				\caption{Illustration of the overfitting phenomenon. The target density $p$ is represented in red. We approximate it by $d\rho_n$ supported on the black dots with corresponding weights $\eta_i$ proportional to $p(x_i)$, $i=1,\ldots, n$. For $\lambda = 10^{-3}$, we use Eq.~(\ref{eq:empiricalChristoffel}) to compute the corresponding empirical Christoffel function represented in dark blue.}
				\label{fig:overfitting}
\end{figure}
\paragraph{Monte Carlo approximation:} Assuming that $\int_{\rb^d}p(x)dx = 1$, if one has the possibility to draw an \textit{i.i.d.} sample $\left( x_i \right)_{i=1,\ldots, n}$, with density $p$, then one can use $\eta_i = \frac{1}{n}$ for $i=1,\ldots, n$. Our estimators take the form
$\hat{C}_{\lambda}(z)^{-1} =  \left\langle k(z,\cdot),(\hat{\Sigma} + \lambda I)^{-1} k(z,\cdot)  \right\rangle_{\H}$, where $\hat{\Sigma}$ is the empirical covariance operator.
Thus, we have:
\begin{equation}
\begin{split}
\left|\hat{C}_{\lambda}(z)^{-1}
- {C}_{\lambda}(z)^{-1}\right|
& \leq
\left| \left\langle k(z,\cdot),\big[ (\hat{\Sigma} + \lambda I)^{-1}
-({\Sigma} + \lambda I)^{-1} \big]k(z,\cdot)  \right\rangle_{\H} \right| \\
& \leq
\| k(z,\cdot) \|_{\H}^2 
\| (\hat{\Sigma} + \lambda I)^{-1}
-({\Sigma} + \lambda I)^{-1} \|_{\rm op} \\
& \leq 
k(z,z) \lambda^{-2} \| \Sigma - \hat{\Sigma} \|_{\rm op} \,.
\end{split}
\label{eq:boundDiscreteApprox}
\end{equation}
Since, $\| \Sigma - \hat{\Sigma} \|_{\rm op}$ is of order $n^{-1/2}$ (see, e.g., \cite{minsker2011extensions}), 
if $\lambda^2 n^{1/2}$ is large enough, then we obtain a good estimation of the Christoffel function (note that better bounds could be obtained with respect to $\lambda$ using tools from~\cite{bach2013sharp,alaoui2015fast,rudi2015less}).

\paragraph{Gaussian kernel:} A natural question is whether Theorem \ref{th:mainResult} holds for the Gaussian kernel $k \colon (x,y) \mapsto e^{\frac{\|x-y\|^2}{l}}$ where $l>0$ is a bandwidth parameter. For this choice of kernel, $D(\lambda)$ is of the order of $-1 / \log(\lambda)$, which decreases very slowly. We conjecture that Assumption \ref{ass:kernelAssumption3} fails in this setting and that Theorem \ref{th:mainResult} does not hold. Indeed, performing the same simulation as in Section~\ref{sec:numerics} with a piecewise constant density, we observe that the localization phenomenon no longer holds. This is presented in Figure \ref{fig:gaussianPC} which displays important boundary effects around discontinuities. For comparison purpose, Figure \ref{fig:sobolevPC} gives the same result for Matt\'ern kernels.

\begin{figure}[ht]
				\centering
				\includegraphics[width=.7\textwidth]{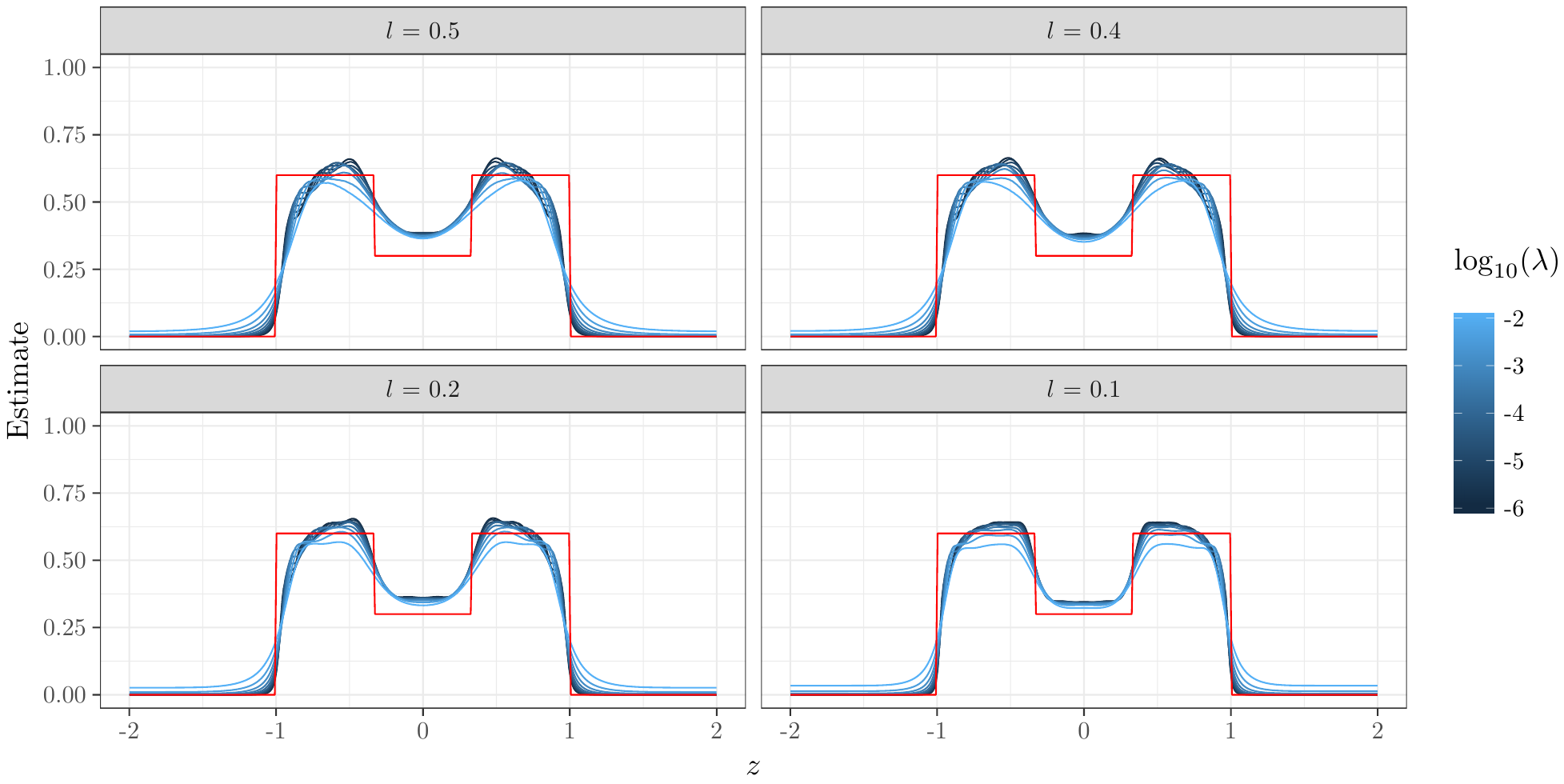}
				\caption{Estimate of the Christoffel function for the Gaussian kernel, different values of the bandwidth and a piecewise constant density. The setting is the same as in the experiment presented in Section \ref{sec:numerics}. The behavior at the discontinuities suggest that variations of the density affect the value of the Christoffel function \emph{beyond} the local scale described in Theorem \ref{th:mainResult}.}
				\label{fig:gaussianPC}
\end{figure}

\begin{figure}[ht]
				\centering
				\includegraphics[width=.7\textwidth]{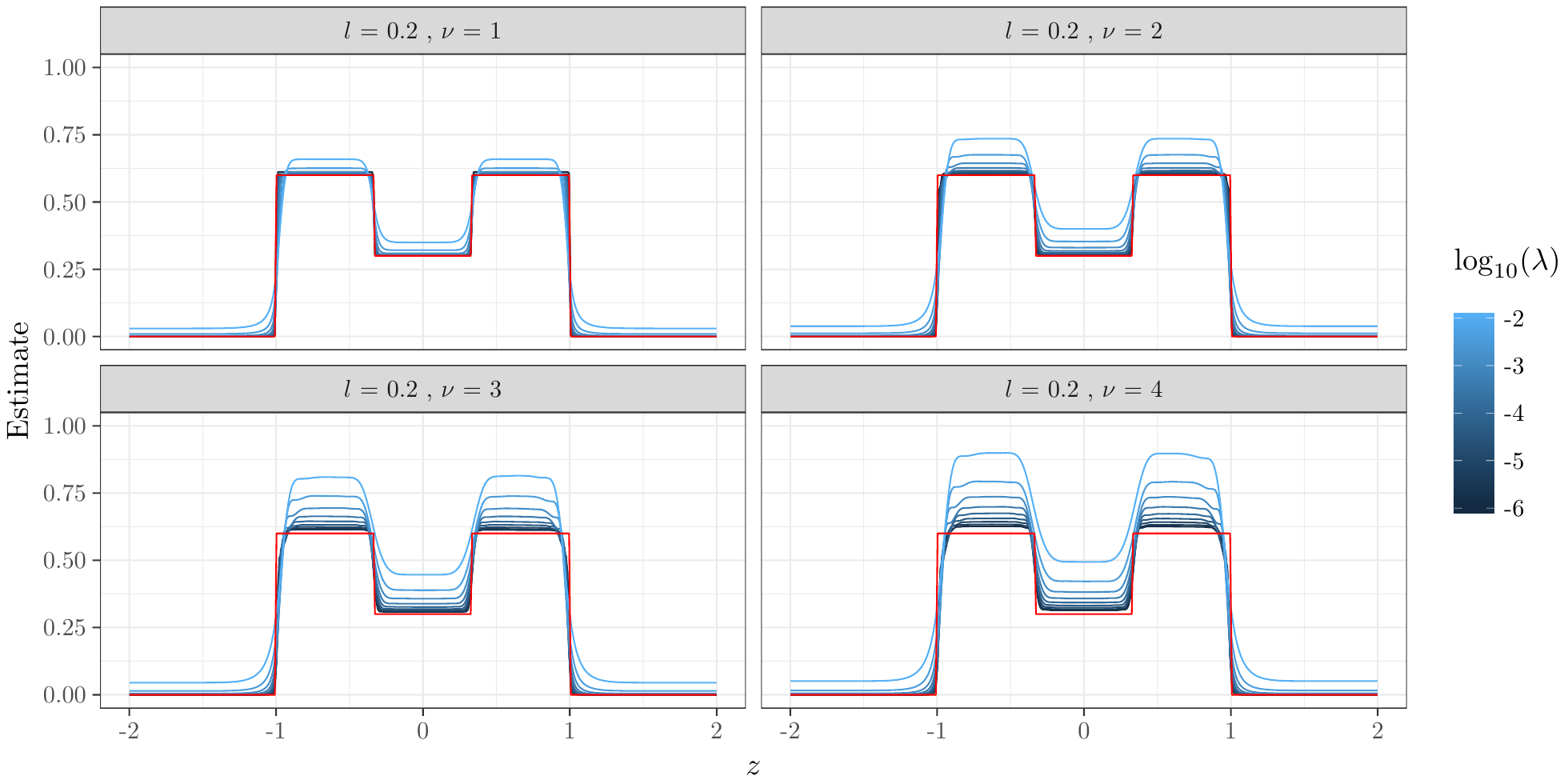}
						\caption{Estimate of the Christoffel function for the Matt\`ern kernel, different values of the parameters and a piecewise constant density. The setting is the same as in the experiment presented in Section~\ref{sec:numerics}. The behavior at the discontinuities suggest that variations of the density affect the value of the Christoffel \emph{only at} the local scale described in Theorem \ref{th:mainResult}.}
				\label{fig:sobolevPC}
\end{figure}

\paragraph{Illustration in dimension 2:} For illustration purpose, we consider a density on the unit square in dimension 2 and compare it with the estimate obtained from the regularized Christoffel function using the Riemann plug-in approximation procedure. We choose the Mat\'ern kernel with $\nu = 1$ and $l = 0.2$ and $\lambda = 0.001$. Figure \ref{fig:sinDens2d} illustrates the correspondence between the true density and the obtained estimate.

\begin{figure}[ht]
				\centering
				\includegraphics[width=.7\textwidth]{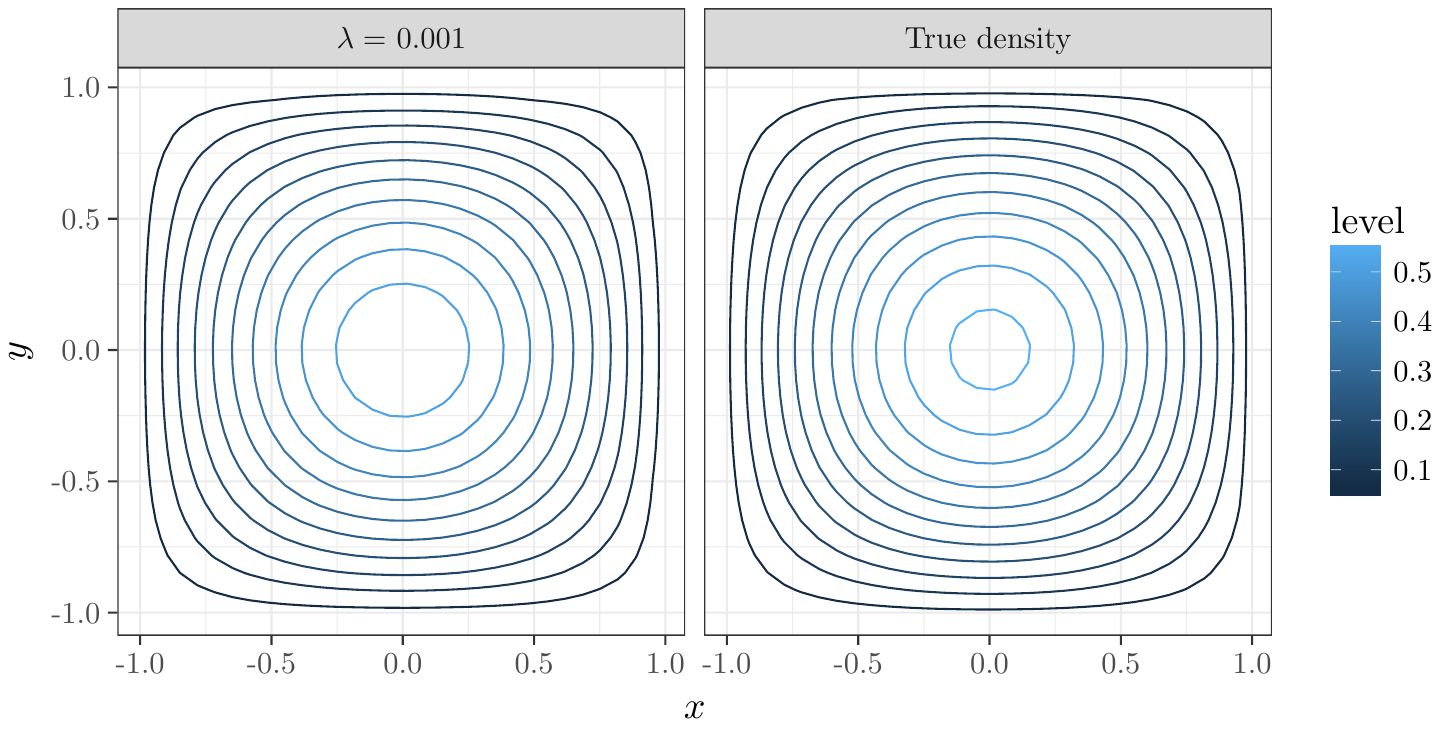}
						\caption{Comparison between the level sets of a given density (right) and the estimate given by the empirical Christoffel function with $\nu = 1$, $l = 0.2$ and $\lambda = 0.001$ (left). We use the Riemann plug-in approximation procedure with a grid of 2025 points on $[-1.5,1.5]^2$. The estimate captures both the round shape of the level sets in the middle and the squared shape of the support ($[-1,1]^2$).} 
				\label{fig:sinDens2d}
\end{figure}

\section{Proofs}
\label{sec:proofs}

\subsection{Proof of Theorem \ref{th:mainResult}}
\label{sec:proofTh1}

The proof is organized as follows, first we will prove an upper bound on $C_\lambda$ which is of the order of the claimed equivalent plus negligible terms. In a second step we produce a lower bound on $C_\lambda$ which is of the same nature. Assumptions \ref{ass:kernelAssumption} and \ref{ass:kernelAssumption3} are assumed to hold true throughout this section.

Recall that we have $f_\lambda = D(\lambda) g_\lambda$ with the notations given in Eq.~(\ref{eq:charcterizationRKHS1}) of the main text. We will work with $z = 0$ since the general result can be obtained by a simple translation. We consider $p$ as in Assumption \ref{ass:kernelAssumption} and assume throughout the section that $p(0) = 1$. This is without loss of generality since $p(0) > 0$, one can substitute $p$ by $p / p(0)$ and  $\lambda$ by $\lambda/p(0)$ and use 
\begin{align}
	C_\lambda(0) = \min_{f \in \H_0} \int_{\rb^{d}}f(x)^2 p(x)dx + \lambda \| f \|_{\H}^2 = p(0)  \min_{f \in \H_0} \int_{\rb^{d}}f(x)^2 \frac{p(x)}{p(0)} dx + \frac{\lambda}{p(0)} \| f \|_{\H}^2.
    \label{eq:scalingp}
\end{align}
Combining translations and scaling in (\ref{eq:scalingp}), we only need to show that $C_\lambda(0) \sim D(\lambda)$ when $p(0) = 1$ and $p$ is continuous at $0$.

\paragraph{Upper bound:} For any $\lambda > 0$, $f_{\lambda} $ is feasible for $C_\lambda(0)$ in problem (\ref{eq:christoffel}) and therefore, using $p(0) = 1$,
\begin{align}
				C_\lambda(0) &\leq \left(\int f_{\lambda} (x)^2 dx + \lambda\|f_{\lambda} \|_\H^2   \right) + \int_{\rb^{d}}(p(x) - 1) f_{\lambda}^2(x) dx \nonumber\\
				&= D(\lambda) + \int_{\rb^{d}}(p(x) - 1) f_{\lambda}^2(x) dx.
				\label{eq:upperBoundChristo0}
\end{align}
We only need to control the last term. The result then follows from the next Lemma which proof is postponed to Section \ref{sec:lemmasMainRes}.
\begin{lemma} As $\lambda \to 0$ with $\lambda > 0$,
				\begin{align*}
								\int_{\rb^{d}} f_\lambda(x)^2 \left( p(x) - 1 \right)dx = o(D(\lambda)).
				\end{align*}
				\label{lem:upperBound}
\end{lemma}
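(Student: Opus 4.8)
The goal is to show $\int_{\rb^d} f_\lambda(x)^2(p(x)-1)\,dx = o(D(\lambda))$ as $\lambda\to 0$. The plan is to split the integral at the radius $\varepsilon(\lambda)$ provided by Assumption \ref{ass:kernelAssumption3}: write
\begin{align*}
\int_{\rb^d} f_\lambda(x)^2(p(x)-1)\,dx = \int_{\|x\|<\varepsilon(\lambda)} f_\lambda(x)^2(p(x)-1)\,dx + \int_{\|x\|\geq\varepsilon(\lambda)} f_\lambda(x)^2(p(x)-1)\,dx.
\end{align*}
For the near term, I would use continuity of $p$ at $0$ together with $p(0)=1$: given $\delta>0$, for $\lambda$ small enough $|p(x)-1|\leq\delta$ on $\|x\|<\varepsilon(\lambda)$ (since $\varepsilon(\lambda)\to 0$), so the near term is bounded in absolute value by $\delta\int_{\rb^d} f_\lambda(x)^2\,dx \leq \delta\,D(\lambda)$, using that $\int f_\lambda^2 + \lambda\|f_\lambda\|_\H^2 = D(\lambda)$ and both summands are nonnegative. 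Since $\delta$ is arbitrary, this contributes $o(D(\lambda))$.

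For the far term, I would bound $|p(x)-1|\leq p(x) + 1$ and split further. The ``$+1$'' part gives $\int_{\|x\|\geq\varepsilon(\lambda)} f_\lambda(x)^2\,dx$, which is $o(\lambda D(\lambda)) = o(D(\lambda))$ directly by Assumption \ref{ass:kernelAssumption3}. The ``$p(x)$'' part, $\int_{\|x\|\geq\varepsilon(\lambda)} f_\lambda(x)^2 p(x)\,dx$, is the one requiring care: here I would use $p\in L^\infty(\rb^d)$ (Assumption \ref{ass:kernelAssumption}.2) to bound it by $\|p\|_{L^\infty}\int_{\|x\|\geq\varepsilon(\lambda)} f_\lambda(x)^2\,dx$, which is again $o(\lambda D(\lambda)) = o(D(\lambda))$ by Assumption \ref{ass:kernelAssumption3}. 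Combining the three pieces yields the claim.

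The only genuine subtlety — and the step I expect to be the main obstacle — is making sure the near-term argument is airtight regarding the interplay of the two limits ($\delta\to 0$ and $\lambda\to 0$): one fixes $\delta$ first, then uses $\varepsilon(\lambda)\to 0$ and continuity of $p$ at $0$ to find a threshold $\lambda_\delta$ below which $|p(x)-1|\leq\delta$ on the ball of radius $\varepsilon(\lambda)$, and only then lets $\delta\to 0$. This is a standard $\varepsilon$-$\delta$ bookkeeping but deserves to be written out carefully. Everything else reduces to the trivial bound $\int f_\lambda^2 \leq D(\lambda)$ and a direct appeal to Assumption \ref{ass:kernelAssumption3}; no hard analysis or new estimates are needed beyond what is already assumed.
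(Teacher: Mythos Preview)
Your proposal is correct and follows essentially the same approach as the paper: split the integral at radius $\varepsilon(\lambda)$, control the near part via continuity of $p$ at $0$ together with $\int f_\lambda^2 \leq D(\lambda)$, and control the far part via $p\in L^\infty$ and Assumption~\ref{ass:kernelAssumption3}. The paper packages your $\delta$--$\lambda$ bookkeeping into the single function $E(t)=\sup_{\|x\|\leq t}|p(x)-1|$ and bounds the far term directly by a constant times $\int_{\|x\|\geq\varepsilon(\lambda)} f_\lambda^2$, but the substance is identical.
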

Combining (\ref{eq:upperBoundChristo0}) and Lemma \ref{lem:upperBound}, we obtain, as $\lambda \to 0$
\begin{align}
				C_\lambda(0) &\leq D(\lambda) + o (D(\lambda) ).
				\label{eq:upperBoundChristo}
\end{align}

\paragraph{Lower bound:} To prove the lower bound, let $E \colon t\mapsto \sup_{\|x\| \leq t} |p(x) - 1|$. The quantity $E$ is non negative and we have $\lim_{t \to 0} E(t) = 0$ by continuity of $p$. Choosing $\varepsilon(\lambda)$ as given by Assumption \ref{ass:kernelAssumption3}, we obtain for any $\lambda > 0$ sufficiently small, using $p(0) = 1$,
\begin{align}
				C_\lambda(0) &\geq \inf_{f \in \H_0} \int_{\|x\| \leq \varepsilon(\lambda)} f(x)^2 p(x) dx + \lambda \|f\|^2_\H\nonumber\\
				&\geq \inf_{f \in \H_0} \left(1 -  E\left( \varepsilon\left(\lambda\right) \right) \right) \int_{\|x\| \leq \varepsilon(\lambda)} f(x)^2 dx + \lambda \|f\|^2_\H\nonumber\\
&\geq \inf_{f \in \H_0} \left(1 -  E\left( \varepsilon\left(\lambda\right) \right) \right) \left( \int_{\|x\| \leq \varepsilon(\lambda)} f(x)^2 dx + \lambda \|f\|^2_\H \right). 
				\label{eq:lowerBoundChristo0}
\end{align}
We need to control the last term. This is the purpose of the following Lemma which proof is postponed to Section \ref{sec:lemmasMainRes}.
\begin{lemma}
				Let $\varepsilon$ be given as in Assumption \ref{ass:kernelAssumption3}, then, as $\lambda \to 0$ with $\lambda > 0$, we have
				\begin{align*}
								\inf_{f \in \H_0} \int_{\|x\| \leq \varepsilon(\lambda)} f(x)^2 dx + \lambda \|f\|^2_\H \geq D(\lambda) + o(D(\lambda)).
				\end{align*}
				\label{lem:lowerBound}
\end{lemma}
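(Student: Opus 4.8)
The plan is to establish the bound for an \emph{arbitrary} feasible $f \in \H_0$, uniformly in $f$, and then take the infimum; this avoids any question of attainment in the restricted problem. The setup I would use is the bilinear form $B(u,v) := \int_{\rb^d} u(x)v(x)\,dx + \lambda \langle u, v\rangle_\H$ on $\H$, which is finite and symmetric because $\H \subset L^2(\rb^d)$ by (\ref{eq:charcterizationRKHS1}). Two facts drive the argument. First, writing $M := \sup_{\omega\in\rb^d} \hat q(\omega)$, which is finite since $|\hat q(\omega)| \le \|q\|_{L^1(\rb^d)}$ under Assumption \ref{ass:kernelAssumption}, Parseval's identity and (\ref{eq:charcterizationRKHS2}) give the comparison $\|u\|_{L^2(\rb^d)}^2 \le M \|u\|_\H^2$ for all $u \in \H$. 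Second, by Lemma \ref{lem:surrogateSolution} the function $f_\lambda$ minimizes $B(f,f)$ over $\H_0$ with optimal value $D(\lambda)$, so the usual first-order argument (perturb $f_\lambda$ by $t g$ with $g(0)=0$ and differentiate at $t=0$) gives $B(f_\lambda, g) = 0$ for every $g \in \H$ with $g(0)=0$, hence the Pythagorean-type identity $B(f_\lambda + g, f_\lambda + g) = D(\lambda) + B(g,g)$.

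Given these, I would proceed as follows. Fix $f \in \H_0$, write $f = f_\lambda + g$ with $g(0)=0$, and split the integral to obtain
\begin{align*}
\int_{\|x\|\le \varepsilon(\lambda)} f(x)^2\,dx + \lambda \|f\|_\H^2 = B(f,f) - \int_{\|x\| > \varepsilon(\lambda)} (f_\lambda + g)^2\,dx = D(\lambda) + B(g,g) - \int_{\|x\| > \varepsilon(\lambda)} (f_\lambda + g)^2\,dx.
\end{align*}
The key step is to control the last integral uniformly in $g$. Applying $(a+b)^2 \le (1+\delta)a^2 + (1+\delta^{-1})b^2$ pointwise with the $\lambda$-dependent choice $\delta = M/\lambda$ and integrating over $\{\|x\| > \varepsilon(\lambda)\}$ gives
\begin{align*}
\int_{\|x\| > \varepsilon(\lambda)} (f_\lambda + g)^2\,dx \le (1 + M/\lambda)\int_{\|x\| > \varepsilon(\lambda)} f_\lambda^2\,dx + (1 + \lambda/M)\|g\|_{L^2(\rb^d)}^2 \le (1 + M/\lambda)\int_{\|x\| > \varepsilon(\lambda)} f_\lambda^2\,dx + B(g,g),
\end{align*}
where the final inequality uses $\tfrac{\lambda}{M}\|g\|_{L^2(\rb^d)}^2 \le \lambda \|g\|_\H^2$ together with $B(g,g) = \|g\|_{L^2(\rb^d)}^2 + \lambda\|g\|_\H^2$. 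The first term no longer depends on $g$, and by Assumption \ref{ass:kernelAssumption3} it equals $(1+M/\lambda)\cdot o(\lambda D(\lambda)) = o(D(\lambda))$. Substituting back, the restricted objective is at least $D(\lambda) - o(D(\lambda))$ for every feasible $f$; taking the infimum over $\H_0$ yields the claim.

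The one point that requires care, and where I expect the real content to lie, is the $\lambda$-dependent balancing $\delta = M/\lambda$: the split must be uniform in $g$, which forces $\delta$ to grow at least like $1/\lambda$ so that the $g$-dependent term is dominated by $B(g,g)$, and this is \emph{exactly} affordable because Assumption \ref{ass:kernelAssumption3} supplies the extra factor of $\lambda$ in $\int_{\|x\|>\varepsilon(\lambda)} f_\lambda^2 = o(\lambda D(\lambda))$, which absorbs the $1/\lambda$ blow-up. Everything else is routine: boundedness of $\hat q$ from Assumption \ref{ass:kernelAssumption}, the inclusion $\H\subset L^2(\rb^d)$ that makes $B$ well defined, and the variational characterization of $f_\lambda$ from Lemma \ref{lem:surrogateSolution}. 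An equivalent route would phrase $B$ via the covariance operator of Lebesgue measure restricted to the ball and expand the relevant resolvent, matching the ``expansions of operator inverses'' in the proof sketch, but the direct computation above seems cleaner.
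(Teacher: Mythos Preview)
Your argument is correct and takes a genuinely different route from the paper. The paper works with the \emph{inverse} optimal values: it introduces operators $\Sigma$ and $M_\epsilon$ on $\H$ (integration against Lebesgue on $\rb^d$ and on the ball $B_\epsilon$), applies Lemma~\ref{lem:quadProgConstr} to write $D(\lambda)^{-1}=\langle K_0,(\Sigma+\lambda I)^{-1}K_0\rangle_\H$ and $\tilde D_\epsilon(\lambda)^{-1}=\langle K_0,(M_\epsilon+\lambda I)^{-1}K_0\rangle_\H$, and then uses the resolvent identity to get $D(\lambda)^{-1}-\tilde D_\epsilon(\lambda)^{-1}=\int_{\|x\|\ge\epsilon} g_\lambda\tilde g_\lambda\,dx$, which it bounds by Cauchy--Schwarz together with $\int\tilde g_\lambda^2\le \|\hat q\|_\infty(\lambda\tilde D_\epsilon(\lambda))^{-1}$. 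You instead stay on the primal side: the Pythagorean identity $B(f_\lambda+g,f_\lambda+g)=D(\lambda)+B(g,g)$ reduces everything to controlling $\int_{\|x\|>\varepsilon(\lambda)}(f_\lambda+g)^2$, and your $\lambda$-dependent Young split with $\delta=M/\lambda$ is exactly calibrated so that the $g$-term is absorbed by $B(g,g)$ while the $f_\lambda$-term picks up the $1/\lambda$ that Assumption~\ref{ass:kernelAssumption3} was designed to pay for. Both proofs hinge on the same two ingredients---the norm comparison $\|u\|_{L^2}^2\le \|\hat q\|_\infty\|u\|_\H^2$ and the $o(\lambda D(\lambda))$ tail bound---but your version avoids operator inverses entirely and makes the role of the extra $\lambda$ in Assumption~\ref{ass:kernelAssumption3} more transparent; the paper's version, on the other hand, fits naturally into the leverage-score/resolvent framework used elsewhere in the paper and would generalize more readily to other spectral functions of $\Sigma$.
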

Combining (\ref{eq:lowerBoundChristo0}) and Lemma \ref{lem:lowerBound}, we obtain, as $\lambda \to 0$

\begin{align}
				C_\lambda(0) \geq \left(1 -  E\left( \varepsilon\left(\lambda\right) \right) \right) \left( D(\lambda ) + o(D(\lambda)) \right)
                 = D(\lambda) + o(D(\lambda)). 
				\label{eq:lowerBoundChristo}
\end{align}

To conclude, combining (\ref{eq:upperBoundChristo}) and (\ref{eq:lowerBoundChristo}), we obtain $C_\lambda(0) \sim D(\lambda)$ as claimed.

\subsection{Lemmas for Section \ref{sec:proofTh1} and proof of  Lemma \ref{lem:surrogateSolution} of the main text}
\label{sec:lemmasMainRes}

\begin{proof}{\bf of  Lemma \ref{lem:surrogateSolution}:}
				Eq.~(\ref{eq:charcterizationRKHS1}) characterizes $\H$ and in particular, any function in $\H$ is in $L^2$ so that Parseval theorem holds. Furthermore for any $f \in \H$, $\hat{f}$ is in $L^2(\rb^{d})\cap L^1(\rb^{d})$ (see Remark \ref{rem:transformInL1}). Rewriting (\ref{eq:mainSubProblem}) in the Fourier domain, we have
\begin{align}
				D(\lambda) 
				= \quad\inf \quad&\frac{1}{(2\pi)^{d}}\int_{\rb^{d}}|\hat{f}(\omega)|^2 \frac{\hat{q}(\omega) + \lambda}{\hat{q}(\omega)} d\omega\nonumber\\
				\text{s.t.} \quad&\hat{f} \in L^2(\rb^{d}) \cap L^1(\rb^{d})\nonumber\\
				& \int_{\rb^{d}}\frac{|\hat{f}(\omega)|^2}{\hat{q}(\omega)}d\omega <  + \infty \nonumber\\
								&\frac{1}{(2\pi)^{d}} \int_{\rb^{d}} \hat{f}(\omega) d\omega = 1.
				\label{eq:mainSubProblem2}
\end{align}
The space $\tilde{\H} = \left\{ \hat{f} \in L^2(\rb^{d})\cap L^1(\rb^{d});\;\int_{\rb^{d}}\frac{|\hat{f}(\omega)|^2}{\hat{q}(\omega)}d\omega <  + \infty \right\}$ endowed with the inner product $\frac{1}{(2\pi)^{d}} \left\langle \hat{f_1},\hat{f_2} \right\rangle_{\hat{\H}} = \int_{\rb^{d}} \frac{\hat{f_1}(\omega)\overline{\hat{f_2}(\omega)}}{\hat{q}(\omega)} d\omega$ is a Hilbert space which is simply the image of $\H$ by the Fourier transform. Problem (\ref{eq:mainSubProblem2}) can be rewritten in a form that fits Lemma \ref{lem:quadProgConstr} below as follows
\begin{align}
				D(\lambda) 
				= \quad\inf \quad& \left\langle \hat{f}, M \hat{f}\right\rangle_{\hat{\H}}\nonumber\\
				\text{s.t.} \quad&\hat{f} \in\tilde{\H}\nonumber\\
				& \left\langle \hat{f}, \hat{q} \right\rangle_{\tilde{\H}} = 1
				\label{eq:mainSubProblem3},
\end{align}
where $M$ is the operator which consists in multiplication by $(\hat{q} + \lambda)$. For any $\hat{f} \in \tilde{\H}$, we have $\|M\hat{f}\|^2_{\tilde{\H}} \leq (\lambda + \|\hat{q}\|_{L^\infty(\rb^d)})^2 \|\hat{f}\|_{\tilde{\H}}^2$ and $M$ is bounded on $\tilde{\H}$. Using Lemma \ref{lem:quadProgConstr}, we get an expression for the solution of the minimization problem in (\ref{eq:mainSubProblem2}) of the form
\[
\hat{f}(\omega) = D(\lambda) \frac{ \hat{q}(\omega) }{ \hat{q}(\omega) + \lambda },
\]
for all $\omega \in \rb^d$, where the optimal value $D(\lambda)$, ensures that $\langle\hat{f},\hat{q}\rangle_{\tilde{\H}}  =1$. We deduce the value of $D(\lambda)$ and get back to $\H$ by combining Eq.~(\ref{eq:charcterizationRKHS1}) with the inverse Fourier transform of $\hat{f}$ which leads to the claimed expression for $f_\lambda$.
\end{proof}

\begin{proof}{\bf of Lemma \ref{lem:upperBound}:}
				Let $E \colon t\mapsto \sup_{\|x\| \leq t} |p(x) - 1|$. We have $\lim_{t \to 0} E(t) = 0$ by continuity of $p$. Let $\varepsilon(\lambda)$ be given as in Assumption \ref{ass:kernelAssumption3}.
				\begin{align*}
								&\int_{\rb^{d}} f_\lambda^2(x) \left( p(x) - 1 \right)dx \\
								= \quad&\int_{\|x\| \geq \varepsilon(\lambda)} f_\lambda^2(x)  \left( p(x) - 1 \right)dx + \int_{\|x\| \leq \varepsilon(\lambda)} f_\lambda^2(x)  \left( p(x) - 1 \right)dx \\
								\leq \quad& \|p\|_{L^\infty (\rb^{d})}\int_{\|x\| \geq \varepsilon(\lambda)} f_\lambda^2(x)  dx + E(\varepsilon(\lambda) )\int_{\|x\| \leq   \varepsilon(\lambda) } f_\lambda^{2}(x) dx \\
								\leq \quad& \|p\|_{L^\infty (\rb^{d})}\int_{\|x\| \geq \varepsilon(\lambda)} f_\lambda^2(x)  dx + E(\varepsilon(\lambda) )D(\lambda).
				\end{align*}
				Using Assumption \ref{ass:kernelAssumption3}, as $\lambda > 0$, the first term is $o(D(\lambda))$ and the sum is also $o(D(\lambda))$. This proves the desired result.
\end{proof}

\begin{proof}{\bf of Lemma \ref{lem:lowerBound}:}
				Consider the surrogate problem, for any $\lambda, \epsilon > 0$,
	\[
		\tilde{D}_\epsilon(\lambda) = \inf_{ g \in \H_0} \int_{\|x\|\leq \epsilon } g(x)^2 dx + \lambda \| g \|_{\H}^2. 
	\]

				From Eq.~(\ref{eq:charcterizationRKHS2}), we have for all $g \in \H$,
				\begin{align}
								\|g\|_{\H}^2 &= \frac{1}{(2\pi)^{d}}\int_{\rb^{d}}\frac{|\hat{g}(\omega)|^2}{\hat{q}(\omega)}d\omega \geq \frac{1}{(2\pi)^{d}\|\hat{q}\|_\infty}\int_{\rb^{d}}|\hat{g}(\omega)|^2d\omega=\frac{1}{\|\hat{q}\|_\infty} \int_{\rb^{d}}g(x)^2dx,
								\label{eq:lowerBound1}
				\end{align}
				where we have used Parseval identity. Note that $\|\hat{q}\|_\infty$ is finite since $q$ is in $L_1$. 
				We fix arbitrary $\lambda > 0$, $\epsilon > 0$ and denote by $B_\epsilon$ the Euclidean ball of radius $\epsilon$. For any $f,g \in \H$, we have using Cauchy-Schwartz inequality,
				\begin{align*}
								\left|\int_{\rb^d}f(x)g(x)dx\right|^2 \leq \int_{\rb^d}(f(x))^2dx \int_{\rb^d} (g(x))^2dx \leq  \|\hat{q}\|_{L^\infty(\rb^d)}^2 \|f\|_{\H}^2 \|g\|_{\H}^2\\
								\left|\int_{B_\epsilon}f(x)g(x)dx\right|^2 \leq \int_{B_{\epsilon}}(f(x))^2 dx\int_{B_{\epsilon}} (g(x))^2dx \leq \|\hat{q}\|_{L^\infty(\rb^d)}^2 \|f\|_{\H}^2 \|g\|_{\H}^2.
				\end{align*}
				Hence both expressions define bounded symmetric bi-linear forms on $\H$ and there is a semidefinite bounded self adjoint operator associated to each of these forms. We call the corresponding operators $\Sigma\colon \H \mapsto \H$ and $M_\epsilon\colon \H \mapsto \H$ respectively, they satisfy for any $f,g \in \H$,
				\begin{align*}
								\left\langle f, \Sigma g\right\rangle_{\H} &= \int_{\rb^d}f(x)g(x)dx\\
								\left\langle f, M_{\epsilon} g\right\rangle_{\H}&= \int_{B_\epsilon}f(x)g(x)dx.
				\end{align*}
				We can apply Lemma \ref{lem:quadProgConstr} and the solution for $\tilde{D}_\epsilon(\lambda)$ is proportional to $\tilde{g}_\lambda = ( M_\epsilon + \lambda I)^{-1} K_0$ and the value of this problem is $\tilde{g}_\lambda(0)^{-1} = \frac{1}{\left\langle K_0,\tilde{g}_\lambda \right\rangle_{\H}}$ where $K_0 = k(0,\cdot) \in \H$. Similar reasoning hold for $g_\lambda$ and $D(\lambda)$.
	We have
	\begin{align*}
				D(\lambda)^{-1} - \tilde{D}_{\epsilon}(\lambda)^{-1}
                = g_\lambda(0) - \tilde{g}_\lambda(0) &= \left\langle K_0, ((\Sigma  + \lambda I)^{-1} - (M_\epsilon + \lambda I)^{-1} )K_0 \right\rangle_{\H}\\
					& =  \left\langle K_0, (\Sigma  + \lambda I)^{-1} (M_\epsilon - \Sigma) (M_\epsilon + \lambda I)^{-1} K_0 \right\rangle_{\H}\\
					& = \left\langle g_\lambda,  M_\epsilon \tilde{g}_\lambda \right\rangle_{\H} -  \left\langle g_\lambda, \Sigma \tilde{g}_\lambda \right\rangle_{\H}\\
					& = \int_{\|x\|\geq \epsilon} g_\lambda(x) \tilde{g}_\lambda(x) dx.
	\end{align*}
	Hence, we obtain by Cauchy-Schwartz inequality
	\begin{align*}
					|g_\lambda(0) - \tilde{g}_\lambda(0)|^2 & = \left(\int_{\|x\|\geq \epsilon} g_\lambda(x) \tilde{g}_\lambda(x) dx \right)^2\\
					&\leq \int_{\|x\|\geq \epsilon} g_\lambda(x)^2dx \int_{\rb^{d}}\tilde{g}_\lambda(x)^2dx.
	\end{align*}
	From (\ref{eq:lowerBound1}), we deduce that 
	\begin{align*}
		\int_{\rb^{d}} \tilde{g}^2_\lambda(x) dx \leq \|\hat{q}\|_\infty \|\tilde{g}_\lambda\|^2_{\H} = \|\hat{q}\|_\infty \tilde{D}_{\epsilon}(\lambda)^{-2}\|\tilde{f}_\lambda\|^2_{\H} \leq \|\hat{q}\|_\infty  \frac{1}{\lambda\tilde{D}_\epsilon(\lambda)},
	\end{align*}
    and obtain,
	\begin{align*}
					|g_\lambda(0) - \tilde{g}_\lambda(0)|^2 & \leq\|\hat{q}\|_\infty \tilde{D}_{\epsilon}(\lambda)^{-1} \lambda^{-1} \int_{\|x\|\geq \epsilon} g_\lambda(x)^2 dx\\
					&\leq\|\hat{q}\|_\infty \tilde{D}_{\epsilon}(\lambda)^{-1} \lambda^{-1} D(\lambda)^{-2} \int_{\|x\|\geq \epsilon} f_\lambda(x)^2 dx.
	\end{align*}
	Now using Assumption \ref{ass:kernelAssumption3}, we can set $\epsilon=\varepsilon(\lambda)$, so that as $\lambda \to 0$, $|g_\lambda(0) - \tilde{g}_\lambda(0)|^2 = o(\tilde{D}_{\varepsilon(\lambda)}(\lambda)^{-1}  D(\lambda)^{-1} )$ and, using $\tilde{D}_{\varepsilon(\lambda)} \leq D(\lambda)$,
	\begin{align*}
					|D(\lambda) - \tilde{D}_{\varepsilon(\lambda)}(\lambda)| = o\left(\sqrt{\tilde{D}_{\varepsilon(\lambda)}(\lambda)  D(\lambda)}\right) = o(D(\lambda)).
	\end{align*}
	Hence, as $\lambda \to 0$, we obtain $\tilde{D}_{\varepsilon(\lambda)}(\lambda) \geq D(\lambda) + o(D(\lambda))$ as claimed.
\end{proof}
\subsection{Proof of Theorem \ref{th:mainResult2}}
\label{sec:proofTh2}
Similarly as in Section \ref{sec:proofTh1}, we assume that $z = 0$, and there exists $\epsilon > 0$ such that $\int_{\|x\|\leq \epsilon} p(x) dx = 0$. In this case, we have for any $\lambda'$ such that $\varepsilon(\lambda') \leq \epsilon$ and any $\lambda > 0$,
\begin{align*}
    C_\lambda(0) &= \inf_{f \in \H_0} \int_{\rb^d} f(x)^2 p(x) dx + \lambda \|f\|_\H^2 \\
    &\leq\int_{\rb^d} p(x) f_{\lambda'}(x)^2 dx + \lambda \| f_{\lambda'} \|_\H^2 \\
	&\leq \| p \|_\infty \int_{\| x \| \geqslant \varepsilon(\lambda')}   f_{\lambda'}(x)^2 dx + \lambda \| f_{\lambda'} \|_\H^2 \\
	&\leq  o(D(\lambda') \lambda')+ (\lambda/ \lambda')  D(\lambda').
\end{align*}
Taking $\lambda' = \sqrt{\lambda}$ proves case (i).

Case (ii) in Theorem \ref{th:mainResult2} follows from a simple argument, using the variational formulation in (\ref{eq:christoffel}). Consider a $C^\infty$ function which evaluates to $1$ at $0$ and to $0$ outside of the ball of radius $\epsilon$ centered at~$0$. Call this function $f_\epsilon$. This function is feasible for problem (\ref{eq:christoffel}) for any value of $\lambda$ and hence we have $C_\lambda(z) \leq \lambda\|f_\epsilon\|_\H$, for all $\lambda > 0$. Note that it follows from Eq.~(\ref{eq:charcterizationRKHS1}) that $\|f_\epsilon\|_\H$ must be finite since $f_\epsilon$ is $C^\infty$ which implies that $\hat{f}_\epsilon$ is decreasing to $0$ faster than any polynomial at infinity and our added assumptions on the kernel imply that  $\hat{f}_\epsilon$ is in $\H$.

\subsection{Proofs for Section \ref{sec:examples}}
\label{sec:proofsExample}
\begin{proof}{\bf of Lemma \ref{lem:laplaceKernel} (i):}
		Lemma \ref{lem:surrogateSolution} provides an analytic description of $D(\lambda)$ and a characterization of the solution $f_\lambda$. We prove the asymptotic expansion of $D(\lambda)$ as $\lambda \to 0$. We have 
		\begin{align*}
						\hat{g}_\lambda \colon \omega \mapsto \frac{1}{1 + \lambda (P(\omega))^\gamma},
		\end{align*}
		and hence, denoting by $R$, the polynomial $P-Q$ which is of degree at most $2s-1$, for any $x \in \rb^d$, 
		\begin{align}
						g_\lambda(x) &= \frac{1}{(2\pi)^d}\int_{\rb^d} \frac{e^{ix^\top \omega}}{1+ \lambda (P(\omega))^\gamma} d\omega\nonumber\\
						&= \frac{\lambda^{-d/(2s\gamma)}}{(2\pi)^d}\int_{\rb^d} \frac{e^{i(x\lambda^{-1/(2s\gamma)})^\top (\omega\lambda^{1/(2s\gamma)})}}{1+\left(\lambda^\frac{1}{\gamma} R(\omega)+ Q(\omega\lambda^{1 / (2s\gamma)} ) \right)^\gamma} \lambda^{d/(2s\gamma)}d\omega\nonumber\\
						&= \frac{\lambda^{-d/(2s\gamma)}}{(2\pi)^d}\int_{\rb^d} \frac{e^{i(x\lambda^{-1/(2s\gamma)})^\top \omega}}{1+ \left(\lambda^{\frac{1}{\gamma}} R(\omega \lambda^{-1/(2s\gamma)})+ Q(\omega)  \right)^\gamma} d\omega.
						\label{eq:approxgLambda1}
		\end{align}
		We deduce the following
		\begin{align}
						\left|g_\lambda(0)  - \lambda^{-d/(2s\gamma)} q_0  \right| &=\left|\frac{\lambda^{-d/(2s\gamma)}}{(2\pi)^d} \int_{\rb^d}\left( \frac{1}{1+ \left(\lambda^{\frac{1}{\gamma}} R(\omega \lambda^{-1/(2s\gamma)}) + Q(\omega)\right)^\gamma} -\frac{1}{1+(Q(\omega))^{\gamma}}\right)d\omega \right|\nonumber\\
						&=\frac{\lambda^{-d/(2s\gamma)}}{(2\pi)^d} \int_{\rb^d}\frac{\left(\lambda^{\frac{1}{\gamma}} R(\omega \lambda^{-1/(2s\gamma)}) + Q(\omega)\right)^\gamma  - (Q(\omega))^{\gamma} }{\left(1+ \left(\lambda^{\frac{1}{\gamma}} R(\omega \lambda^{-1/(2s\gamma)}) + Q(\omega)\right)^\gamma\right)\left(1+(Q(\omega))^{\gamma}\right)}d\omega \nonumber\\
						&\leq\frac{\lambda^{-d/(2s\gamma)}}{(2\pi)^d} \int_{\rb^d}\frac{\gamma\left(\lambda^{\frac{1}{\gamma}} R(\omega \lambda^{-1/(2s\gamma)}) + Q(\omega)\right)^{\gamma-1} \lambda^{\frac{1}{\gamma}} R(\omega \lambda^{-1/(2s\gamma)}) }{\left(1+ \left(\lambda^{\frac{1}{\gamma}} R(\omega \lambda^{-1/(2s\gamma)}) + Q(\omega)\right)^\gamma\right)\left(1+(Q(\omega))^{\gamma}\right)}d\omega,
						\label{eq:approxgLambda2}
		\end{align}
		where we have used the fact that for any $x,y \geq 0$, $(x + y)^\gamma \leq \gamma (x + y)^{\gamma - 1} x + y^\gamma$ which is a direct application of Taylor-Lagrange inequality.
		Now consider a constant, $M$, as given by Lemma \ref{lem:upperBoundQR} such that
		\begin{align*}
						R \leq M (1 + Q^{\frac{2s-1}{2s}}).
		\end{align*}
		We have for any $\lambda > 0$ and any $\omega \in \rb^d$, 
		\begin{align}
						\lambda^{\frac{1}{\gamma}} R(\omega \lambda^{-1/(2s\gamma)} ) &\leq M\left(\lambda^{\frac{1}{\gamma}} + \lambda^{\frac{1}{\gamma}} Q(\omega \lambda^{-1/(2s\gamma)} )^{\frac{2s-1}{2s}}\right) \nonumber\\
						&= M\left(\lambda^{\frac{1}{\gamma}} + \lambda^{\frac{1}{2s\gamma}} Q(\omega)^{\frac{2s-1}{2s}}\right)\nonumber\\
						&=M \lambda^{\frac{1}{2s\gamma}} \left( \lambda^{\frac{2s-1}{2s\gamma}} + Q(\omega)^{\frac{2s-1}{2s}} \right) \nonumber\\
						&\leq M 2^{\frac{1}{2s}} \lambda^{\frac{1}{2s\gamma}} \left( \lambda^{\frac{1}{\gamma}} + Q(\omega) \right)^{\frac{2s-1}{2s}}\nonumber\\
						&\leq M 2^{\frac{1}{2s}} \lambda^{\frac{1}{2s\gamma}} \left( \lambda^{\frac{1}{\gamma}}R(\omega \lambda^{-1/(2s\gamma)})  + Q(\omega) \right)^{\frac{2s-1}{2s}},
						\label{eq:approxgLambda3}
		\end{align}
		where we have used Jensen's inequality and the fact that $R \geq 1$ from Assumption \ref{ass:kernelAssumption4} for the last two identities. 
		Combining (\ref{eq:approxgLambda2}) and (\ref{eq:approxgLambda3}), we obtain for any $\lambda \geq 0$
		\begin{align}
						\left|g_\lambda(0)  - \lambda^{-d/(2s\gamma)} q_0  \right|&\leq\lambda^{\frac{1-d}{2s\gamma}}  \frac{M\gamma 2^{\frac{1}{2s}}}{(2\pi)^d}  \int_{\rb^d}\frac{\left(\lambda^{\frac{1}{\gamma}} R(\omega \lambda^{-1/(2s\gamma)}) + Q(\omega)\right)^{\gamma-\frac{1}{2s}}  }{\left(1+ \left(\lambda^{\frac{1}{\gamma}} R(\omega \lambda^{-1/(2s\gamma)}) + Q(\omega)\right)^\gamma\right)\left(1+(Q(\omega))^{\gamma}\right)}d\omega .
						\label{eq:approxgLambda5}
		\end{align}
		A standard computation ensures that for any $x \geq 0$
		\begin{align}
						\frac{x^{\gamma - \frac{1}{2s}}}{1 + x^\gamma} =\frac{\left( x^\gamma \right)^{1 - \frac{1}{2s\gamma}}}{1 + x^\gamma} \leq\frac{\left(1 +  x^\gamma \right)^{1 - \frac{1}{2s\gamma}}}{1 + x^\gamma} = \left(1 +  x^\gamma \right)^{- \frac{1}{2s\gamma}} \leq 1.
						\label{eq:approxgLambda6}
		\end{align}
		Combining (\ref{eq:approxgLambda5}) and (\ref{eq:approxgLambda6}) we obtain for all $\lambda > 0$,
		\begin{align}
						\left|g_\lambda(0)  - \lambda^{-d/(2s\gamma)} q_0  \right|&\leq \lambda^{\frac{1-d}{2s\gamma}}\frac{ M\gamma 2^{\frac{1}{2s}}}{(2\pi)^d}  \int_{\rb^d}\frac{1}{\left(1+(Q(\omega))^{\gamma}\right)}d\omega .
						\label{eq:approxgLambda7}
		\end{align}
		In particular, we deduce from (\ref{eq:approxgLambda7}) that
		\begin{align*}
						\frac{1}{D(\lambda)} = g_\lambda(0) = q_0\lambda^{-d/(2s\gamma)} + O(\lambda^{(1 - d) / (2s\gamma)}).
		\end{align*}
		So that $D(\lambda) = \frac{\lambda^{d/(2s\gamma)}}{q_0} + O(\lambda^{(1 + d) / (2s\gamma)})$ which is the desired result.
	
\end{proof}
\begin{proof}{\bf of Lemma \ref{lem:laplaceKernel} (ii):}
				We verify that the choice of $p \in \nb^*$ ensures that $2p \in [2s\gamma, 4s\gamma)$. Indeed, if $s\gamma \geq 1$, we have by definition of the upper integral part that $2s \gamma \leq 2p< 2s\gamma + 2 \leq 4s\gamma$. If $s \gamma < 1$, we have $p = 1$ and $ 2s\gamma > d \geq 1$ so that $2s\gamma \leq 2p =2 < 4s\gamma$.
				We deduce from Lemma \ref{lem:derivFourierTransform} that there exists a constant $N$ such that for any $\omega \in \rb^d$ and $\lambda > 0$, 
				\begin{align}
								\left|\frac{\partial^{2p} \hat{g}_\lambda(\omega)}{\partial \omega_1^{2p}} \right| \leq N \lambda\hat{g}_\lambda(\omega).
                                \label{eq:laplaceKern1}
				\end{align}
                Hence successive derivatives of $\hat{g}_\lambda$ are in $L^1$. Differentiating under the integral sign for the Fourier transform ensures that differentiation in the Fourier domain amounts to multiplication by a monomial in the space domain, we obtain the following bound:
				\begin{align*}
								\sup_{x\in \rb^d} |x_1^{2p} g_\lambda(x)| \leq \frac{1}{(2\pi)^{d}} \int_{\rb^d} \left|\frac{\partial^{2p} \hat{g}_\lambda(\omega)}{\partial \omega_1^{2p}} \right|d\omega.
				\end{align*}
				Evaluating the inverse Fourier transform of $\hat{g}_\lambda$ at $0$, using (\ref{eq:laplaceKern1}), we have for any $\lambda > 0$
				\begin{align*}
								\sup_{x\in \rb^d} |x_1^{2p} g_\lambda(x)| \leq \frac{N \lambda}{D(\lambda)},
				\end{align*}
				and 
				\begin{align*}
								\sup_{x\in \rb^d} |x_1^{2p} f_\lambda(x)| \leq N\lambda.
				\end{align*}
				The choice of $x_1$ was arbitrary and similar results hold for all coordinates. We deduce that there exists $M_1>0$ such that for all $x \in \rb^d$ and $\lambda > 0$
				\begin{align*}
								f_\lambda^2(x) \frac{\|x\|^{8p}}{1 + \|x\|^{6p}} \leq M_1\lambda^2 \frac{\|x\|^{4p}}{1 + \|x\|^{6p}}.
				\end{align*}
				Note that the right hand side function is integrable since $2p \geq 2s\gamma > d$. We have for any $l > 0$ and $\lambda \in (0,1)$, 
				\begin{align*}
								\int_{\|x\|\geq \lambda^{l}} f_\lambda^2(x) \frac{\|x\|^{8p}}{1 + \|x\|^{6p}} dx \geq \frac{\lambda^{8pl}}{ 1 + \lambda^{6pl}}\int _{\|x\|\geq \lambda^l} f_\lambda^2(x) dx \geq \frac{\lambda^{8pl} }{ 2}\int _{\|x\|\geq \lambda^{l}} f_\lambda^2(x) dx
				\end{align*}
				Combining the last two inequalities, we obtain
				\begin{align*}
								\int_{\|x\|\geq \lambda^l} f_\lambda^2(x) dx \leq M_2 \lambda^{2 - 8pl},
				\end{align*}
				for some constant $M_2$ and any $\lambda \in (0,1)$. Choosing $l < \left(1 - \frac{d}{2s\gamma}\right)/(8p)$ ensures that $2 - 8pl > 1 + \frac{d}{2s\gamma}$  and hence $\lambda^{2 - 8pl} = o(\lambda D(\lambda))$, using Lemma \ref{lem:laplaceKernel} (i). This is the desired result.
\end{proof}
\section{Additional Lemmas}
\label{sec:lemmas}
\begin{lemma}
				Let $H$ be a complex Hilbert space with Hermitian form $\left\langle \cdot,\cdot \right\rangle_H$, let $M\colon H \rightarrow H$ be a bounded Hermitian invertible and positive operator and let $u \in H$. Then
				\begin{align*}
								\frac{1}{\left\langle u,  M^{-1} u\right\rangle_H}\quad = \quad \min_{x \in H} \quad& \left\langle x, Mx \right\rangle_H\\ 
								\rm{s.t.}\quad& \left\langle x,u \right\rangle_H = 1 \,,
				\end{align*}
				and the optimal value is attained for $x_0 = \frac{M^{-1} u}{\left\langle u, M^{-1} u\right\rangle_H}$.
				\label{lem:quadProgConstr}
\end{lemma}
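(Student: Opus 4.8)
The plan is to recognize this as the minimization of a squared norm over an affine hyperplane. Since $M$ is bounded, Hermitian, positive and invertible, its inverse $M^{-1}$ is again bounded, Hermitian and positive, so that $\langle v, M^{-1} v\rangle_H > 0$ for every $v \neq 0$; in particular $\langle u, M^{-1} u\rangle_H$ is a strictly positive real number (the degenerate case $u = 0$ being excluded, since then the constraint $\langle x,u\rangle_H = 1$ is infeasible and the right-hand side is undefined). Consequently $x_0 = M^{-1} u / \langle u, M^{-1} u\rangle_H$ is well-defined. A direct computation, using that $M$ is Hermitian together with $\langle M^{-1}u, u\rangle_H = \overline{\langle u, M^{-1}u\rangle_H} = \langle u, M^{-1}u\rangle_H$, shows on the one hand $\langle x_0, u\rangle_H = 1$ (so $x_0$ is feasible) and on the other hand $\langle x_0, M x_0\rangle_H = 1/\langle u, M^{-1}u\rangle_H$, which is exactly the value claimed to be optimal.

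It then remains to show $x_0$ is a minimizer. I would take an arbitrary feasible $x$ and set $h = x - x_0$; since $x$ and $x_0$ both satisfy $\langle \cdot, u\rangle_H = 1$ we have $\langle h, u\rangle_H = 0$. Expanding,
\[
\langle x, M x\rangle_H = \langle x_0, M x_0\rangle_H + 2\,\mathrm{Re}\,\langle h, M x_0\rangle_H + \langle h, M h\rangle_H,
\]
and since $M x_0 = u / \langle u, M^{-1} u\rangle_H$ the middle term equals $2\,\mathrm{Re}\,\langle h, u\rangle_H / \langle u, M^{-1} u\rangle_H = 0$, while $\langle h, M h\rangle_H \geq 0$ by positivity of $M$. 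Hence $\langle x, M x\rangle_H \geq \langle x_0, M x_0\rangle_H$ for every feasible $x$, with equality when $h = 0$, which proves both the identity and that the value is attained at $x_0$.

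I do not expect a genuine obstacle; the statement is a routine constrained quadratic optimization. The only points needing a little care are checking that $\langle u, M^{-1} u\rangle_H$ is real and strictly positive so that the normalization defining $x_0$ is legitimate, and correctly handling the complex-conjugate bookkeeping in the cross term (which vanishes in any case because $\langle h, u\rangle_H = 0$). If one prefers to avoid the explicit guess $x_0$, an equivalent route is to substitute $y = M^{1/2} x$ with the bounded, positive, invertible square root of $M$, reducing the problem to minimizing $\|y\|_H^2$ subject to $\langle y, M^{-1/2} u\rangle_H = 1$ and invoking Cauchy--Schwarz, which yields $\|y\|_H^2 \geq 1/\|M^{-1/2} u\|_H^2 = 1/\langle u, M^{-1} u\rangle_H$ with equality exactly when $y$ is proportional to $M^{-1/2} u$.
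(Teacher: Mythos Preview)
Your argument is correct and essentially identical to the paper's: both expand the quadratic form $\langle x, Mx\rangle_H$ around the candidate $x_0 = M^{-1}u/\langle u, M^{-1}u\rangle_H$, observe that the cross term vanishes because $Mx_0$ is proportional to $u$ while $\langle x-x_0, u\rangle_H = 0$ for any feasible $x$, and conclude by positivity of $M$ on the remainder. Your write-up is slightly more careful about the well-definedness of $x_0$ and the complex bookkeeping, and the alternative Cauchy--Schwarz route via $M^{1/2}$ is a nice addition, but the core proof is the same.
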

\begin{proof}
			For any $y \in H$, we have
			\begin{align}
							\left\langle y, My \right\rangle_H &= \left\langle y - x_0 + x_0, M(y - x_0 + x_0) \right\rangle_H \nonumber \\
							&= \left\langle y - x_0, M(y - x_0) \right\rangle_H + \left\langle x_0, Mx_0 \right\rangle_H + \left\langle y - x_0 , Mx_0 \right\rangle_H + \left\langle Mx_0, y - x_0 \right\rangle_H \nonumber\\
							&= \left\langle y - x_0, M(y - x_0) \right\rangle_H  +
							\frac{1}{\left\langle u,  M^{-1} u\right\rangle_H}
							\left(1 + \left\langle y - x_0 , u \right\rangle_H + \left\langle u, y - x_0 \right\rangle_H  \right)\nonumber \\
							&\geq\frac{1}{\left\langle u,  M^{-1} u\right\rangle_H}
							\left(1 + \left\langle y - x_0 , u \right\rangle_H + \left\langle u, y - x_0 \right\rangle_H  \right). 
							\label{eq:quadProgConstr0}
			\end{align}
			Now assume that $y$ is feasible, that is $\left\langle y,u \right\rangle_H = 1$, since $x_0$ is also feasible, we have 
			$\left\langle y - x_0 , u \right\rangle_H = \left\langle u, y - x_0 \right\rangle_H = 0$. This observation combined with the last inequality (\ref{eq:quadProgConstr0}) concludes the proof.
\end{proof}

\begin{lemma}
				Let $P$ be a $2s$-positive $d$-variate polynomial as given in Definition \ref{def:2Spositive} and let $Q$ be its $2s$-homogeneous part. Let $T$ be a $d$-variate polynomial of degree at most $t \in \nb$. Then there exists a positive constant $M$ such that 
				\begin{align*}
								T &\leq M \left(1 + Q^{\frac{t}{2s}}  \right)\\
								T &\leq M P^{\frac{t}{2s}} .
				\end{align*}
				\label{lem:upperBoundQR}
\end{lemma}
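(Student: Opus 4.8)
The plan is to reduce both inequalities to a compactness argument on the unit sphere, after extracting the relevant homogeneity. Since $Q$ is the $2s$-homogeneous part of the $2s$-positive polynomial $P$, Definition \ref{def:2Spositive} tells us that $Q$ is strictly positive on the unit sphere $S^{d-1}$; by continuity and compactness there are constants $0 < c_Q \leq C_Q$ with $c_Q \leq Q(\theta) \leq C_Q$ for all $\theta \in S^{d-1}$, and homogeneity gives $c_Q \|x\|^{2s} \leq Q(x) \leq C_Q \|x\|^{2s}$ for all $x \in \rb^d$. Likewise, $R = P - Q$ has degree at most $2s-1$ and satisfies $R \geq 1$, so $P = Q + R \geq Q$ and also $P(x) \geq 1$ everywhere. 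This immediately shows that the second inequality follows from the first, provided we also control $R$ from above: indeed $P^{t/(2s)} \geq Q^{t/(2s)}$, but we actually want $T \leq M P^{t/(2s)}$, which is \emph{easier} than $T \leq M(1 + Q^{t/(2s)})$ once we know $1 + Q^{t/(2s)} \leq M' P^{t/(2s)}$; and that last bound holds because $P \geq 1$ and $P \geq Q$ give $P^{t/(2s)} \geq \max(1, Q^{t/(2s)}) \geq \tfrac12(1 + Q^{t/(2s)})$. So the whole lemma reduces to the single estimate $T \leq M(1 + Q^{t/(2s)})$.

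For that estimate, first note it suffices to prove $|T(x)| \leq M(1 + \|x\|^t)$ for all $x$, since $1 + \|x\|^t \leq C(1 + Q(x)^{t/(2s)})$ by the two-sided bound $Q(x) \asymp \|x\|^{2s}$ (on $\|x\| \leq 1$ use the constant $1$; on $\|x\| \geq 1$ use $\|x\|^t \leq c_Q^{-t/(2s)} Q(x)^{t/(2s)}$). And $|T(x)| \leq M(1 + \|x\|^t)$ is just the standard fact that a polynomial of degree at most $t$ grows at most like $\|x\|^t$: writing $T = \sum_{|\beta| \leq t} a_\beta x^\beta$, for $\|x\| \geq 1$ each monomial satisfies $|x^\beta| \leq \|x\|^{|\beta|} \leq \|x\|^t$, so $|T(x)| \leq (\sum |a_\beta|)\|x\|^t$, while for $\|x\| \leq 1$ we have $|T(x)| \leq \sum |a_\beta|$; take $M = \sum_{|\beta|\leq t}|a_\beta|$ and combine.

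I do not expect a genuine obstacle here; the only point requiring a little care is keeping track of which inequality implies which (the first implies the second, not the other way around, once $P \geq \max(1,Q)$ is used) and making sure the exponent $t/(2s)$ — which need not be an integer — is handled purely through the scalar inequalities above rather than through any algebraic manipulation of polynomials. If one wanted to be slightly slicker, one could instead argue directly on $S^{d-1}$: the function $x \mapsto T(x)/(1 + Q(x)^{t/(2s)})$ is continuous on $\rb^d$ and, using homogeneity of $Q$ and $\deg T \leq t$, its limsup as $\|x\| \to \infty$ is finite (bounded by $\max_{\theta \in S^{d-1}}$ of the ratio of leading parts over $Q(\theta)^{t/(2s)}$), hence the ratio is globally bounded — but the elementary monomial-by-monomial bound above is cleaner to write out.
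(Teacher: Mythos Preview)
Your proof is correct and follows essentially the same route as the paper: both use compactness on the sphere together with the $2s$-homogeneity of $Q$ to get $Q(x)\asymp\|x\|^{2s}$, and then bound $T$ monomial-by-monomial. The only cosmetic differences are that the paper works with $\|\cdot\|_\infty$ and bounds each monomial directly by $M_1^{|\beta|}Q^{|\beta|/(2s)}$ (avoiding your case split on $\|x\|\lessgtr 1$), and proves the two inequalities in parallel rather than deducing the second from the first via $P^{t/(2s)}\geq\tfrac12(1+Q^{t/(2s)})$ as you do; your reduction is a tidy shortcut.
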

\begin{proof}
		Consider the following quantity
		\begin{align}
						M_1 = \max_{\omega \in \rb^d,\, \omega \neq 0} \frac{\|\omega\|_{\infty}}{Q(\omega)^{\frac{1}{2s}}},
						\label{eq:upperBoundMonomial}
		\end{align}
		Note that, this quantity is well defined because the objective function is homogeneous of degree~$0$, which means invariant by positive scaling. Furthermore, we have for all $\omega \in \rb^d$, that $\|\omega\|_\infty \leq M_1 Q(\omega)^{\frac{1}{2s}}$. Now consider any monomial of the form $\omega^\beta$ for some multi index $\beta \in \nb^d$, with $|\beta| \leq t$, we have for any $\omega$
		\begin{align*}
						|\omega^\beta| \leq \|\omega\|_\infty^{|\beta|} \leq M_1^{|\beta|} Q(\omega)^{\frac{|\beta|}{2s}}\leq M_1^{|\beta|} \left(1 + Q(\omega)^{\frac{t}{2s}}\right),
		\end{align*}
		Since $T$ is of degree at most $t$, this must hold for all the monomials of $T$. The first result follows by a simple summation over monomials of degree up to $t$. The second result follows similarly by using
		\begin{align*}
						|\omega^\beta| \leq \|\omega\|_\infty^{|\beta|} \leq M_1^{|\beta|} Q(\omega)^{\frac{|\beta|}{2s}}\leq M_1^{|\beta|} P(\omega)^{\frac{|\beta|}{2s}}\leq M_1^{|\beta|} P(\omega)^{\frac{t}{2s}},
		\end{align*}
		where the last inequality holds because $P \geq 1$.
\end{proof}

\begin{lemma}{\bf: Fa\`a di Bruno Formula.}
				Let $f \colon (0,+\infty) \mapsto \rb$ and $g \colon \rb^d \mapsto [1, +\infty)$ be infinitely differentiable functions. Then we have for any $n \in \nb^*$
				\begin{align*}
								\frac{\partial^n}{\partial x_1^n} f \circ g(x) = \sum_{\pi \in \Pi} f^{(|\pi|)}(g(x)) \prod_{B \in \pi} \frac{\partial^{|B|} g}{\partial x_1^{|B|}}(x),
				\end{align*}
				where $\Pi$ denotes all partitions of $\left\{ 1,\ldots, n \right\}$, the product is over subsets of $\left\{ 1,\ldots,n \right\}$ given by the partition $\pi$ and $|\cdot|$ denotes the number of elements of a set. We rewrite this as follows
				\begin{align*}
								\frac{\partial^n}{\partial x_1^n} f \circ g(x) = \sum_{k = 1}^n\sum_{\pi \in \Pi_k} f^{(k)}(g(x)) \prod_{B \in \pi} \frac{\partial^{|B|} g}{\partial x_1^{|B|}}(x),
				\end{align*}
				where $\Pi_k$ denotes all partitions of size $k$ of $\left\{ 1,\ldots, n \right\}$.
				\label{lem:faaDiBruno}
\end{lemma}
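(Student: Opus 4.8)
The statement is the classical Fa\`a di Bruno formula, and since only the partial derivative $\partial/\partial x_1$ occurs, the plan is first to reduce it to the one-variable case. Fixing the coordinates $x_2,\ldots,x_d$, I would introduce the function $h\colon t\mapsto g(t,x_2,\ldots,x_d)$, which is infinitely differentiable and takes values in $[1,+\infty)\subset(0,+\infty)$, so that $f\circ h$ is well defined and $C^\infty$. Then $\frac{\partial^n}{\partial x_1^n}(f\circ g)(x)=(f\circ h)^{(n)}(x_1)$ and $\frac{\partial^{|B|}g}{\partial x_1^{|B|}}(x)=h^{(|B|)}(x_1)$ for every subset $B$, so the claim becomes the univariate identity $(f\circ h)^{(n)}=\sum_{\pi\in\Pi}f^{(|\pi|)}(h)\prod_{B\in\pi}h^{(|B|)}$, where $\Pi$ is the set of partitions of $\{1,\ldots,n\}$. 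This is a standard fact and could simply be cited; for completeness I would prove it by induction on $n$.

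The base case $n=1$ is the chain rule: the unique partition of $\{1\}$ has one block of size one, so the right-hand side is $f'(h)\,h'$. For the inductive step, assuming the formula at order $n$, I would differentiate the right-hand side term by term. Applying the product rule to a typical summand $f^{(|\pi|)}(h)\prod_{B\in\pi}h^{(|B|)}$ yields $f^{(|\pi|+1)}(h)\,h'\prod_{B\in\pi}h^{(|B|)}$ together with $f^{(|\pi|)}(h)\sum_{B_0\in\pi}h^{(|B_0|+1)}\prod_{B\in\pi,\,B\neq B_0}h^{(|B|)}$. Differentiation under the (finite) sum is harmless since everything is $C^\infty$, so the only remaining content is combinatorial.

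The key step is the bijection underlying the reindexing: every partition $\pi'$ of $\{1,\ldots,n+1\}$ arises exactly once, according to the block containing $n+1$. If that block is the singleton $\{n+1\}$, then $\pi'=\pi\cup\{\{n+1\}\}$ for a unique partition $\pi$ of $\{1,\ldots,n\}$, and this matches the first type of term (the new singleton contributes $h^{(1)}=h'$ and the number of blocks, hence the order of the derivative of $f$, increases by one); if that block has size at least two, removing $n+1$ yields a unique partition $\pi$ of $\{1,\ldots,n\}$ together with a distinguished block $B_0$ into which $n+1$ was inserted, matching the block-enlargement terms. Verifying that this correspondence is a bijection with matching unit coefficients gives the formula at order $n+1$, and rewriting $\sum_{\pi\in\Pi}$ as $\sum_{k=1}^{n}\sum_{\pi\in\Pi_k}$ is immediate since $f^{(|\pi|)}$ depends only on $|\pi|$. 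I expect the only (minor) obstacle to be phrasing this bijection cleanly; there is no analytic difficulty.
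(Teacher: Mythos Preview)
Your argument is correct: the reduction to the one-variable case by freezing $x_2,\ldots,x_d$ is immediate, and the induction on $n$ with the bijection between partitions of $\{1,\ldots,n+1\}$ and pairs (partition of $\{1,\ldots,n\}$, choice of either a new singleton block or a distinguished existing block) is the standard and valid way to establish the set-partition form of Fa\`a di Bruno.

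The paper takes a different route: it does not prove the formula at all but simply records it as a special case of Propositions~1 and~2 in Hardy's note on the combinatorics of Fa\`a di Bruno, and moves on. So your approach is genuinely different in that it is self-contained, whereas the paper outsources the result to a reference. What the paper's approach buys is brevity; what yours buys is that a reader need not chase the citation, and the inductive bijection you describe is exactly the mechanism behind the cited result anyway. Either is acceptable here since the lemma is classical and only used as a tool.
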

\begin{proof}
				This is a special case of the result stated in \cite[Propositions 1 and 2]{hardy2006combinatorics}.
\end{proof}
\begin{lemma}
				Let $P$ be a $2s$-positive $d$-variate polynomial as given in Definition \ref{def:2Spositive} and let $\gamma \geq 1$ and $m \in \nb^*$. Then there exists a positive constant $M_m$, such that  
				\begin{align*}
								\frac{\partial^m}{\partial x_1^m} P^\gamma \leq M_{m}P^{\gamma - \frac{m}{2s}}.
				\end{align*}
				\label{lem:derivPgamma}
\end{lemma}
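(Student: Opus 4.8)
The plan is to apply the Fa\`a di Bruno formula (Lemma~\ref{lem:faaDiBruno}) with the outer function $f\colon t \mapsto t^\gamma$, which is infinitely differentiable on $(0,+\infty)$, and the inner function $g = P$, which is a polynomial taking values in $[1,+\infty)$ since $P$ is $2s$-positive (Definition~\ref{def:2Spositive}). This yields, for every $x \in \rb^d$,
\begin{align*}
\frac{\partial^m}{\partial x_1^m} P(x)^\gamma = \sum_{k=1}^m \sum_{\pi \in \Pi_k} f^{(k)}(P(x)) \prod_{B \in \pi} \frac{\partial^{|B|} P}{\partial x_1^{|B|}}(x),
\end{align*}
where $\Pi_k$ is the (finite) set of partitions of $\{1,\ldots,m\}$ into $k$ nonempty blocks, and each such partition satisfies $\sum_{B \in \pi}|B| = m$.

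Next I would bound each factor separately. Since $f^{(k)}(t) = \gamma(\gamma-1)\cdots(\gamma-k+1)\,t^{\gamma-k}$ and $P \geq 1 > 0$, we have $|f^{(k)}(P)| \leq c_k\, P^{\gamma - k}$ with $c_k = |\gamma(\gamma-1)\cdots(\gamma-k+1)|$. For a block $B$ with $1 \leq |B| \leq 2s$, the polynomial $\partial^{|B|}_{x_1}P$ has degree at most $2s-|B|$, so applying Lemma~\ref{lem:upperBoundQR} to $\pm\,\partial^{|B|}_{x_1}P$ gives a constant $M_B$ with $\bigl|\partial^{|B|}_{x_1}P\bigr| \leq M_B\, P^{(2s-|B|)/(2s)} = M_B\, P^{1 - |B|/(2s)}$; for $|B| > 2s$ the derivative vanishes identically, so the corresponding summand is zero and such partitions can be discarded.

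Multiplying these bounds over a fixed partition $\pi \in \Pi_k$ whose blocks all have size at most $2s$, the exponents of $P$ telescope:
\begin{align*}
(\gamma - k) + \sum_{B \in \pi}\Bigl(1 - \frac{|B|}{2s}\Bigr) = \gamma - k + k - \frac{m}{2s} = \gamma - \frac{m}{2s},
\end{align*}
so each summand is at most $C_\pi\, P^{\gamma - m/(2s)}$ with $C_\pi = c_k \prod_{B \in \pi} M_B$. Since there are finitely many pairs $(k,\pi)$, summing and using $\frac{\partial^m}{\partial x_1^m}P^\gamma \leq \bigl|\frac{\partial^m}{\partial x_1^m}P^\gamma\bigr|$ gives the claim with $M_m = \sum_{k=1}^m \sum_{\pi \in \Pi_k} C_\pi$. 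There is no genuine obstacle; the only point requiring care is this exponent bookkeeping — the $-k$ coming from the $k$-th derivative of $t\mapsto t^\gamma$ is cancelled exactly by the $k$ unit contributions of the $k$ blocks, while the block sizes sum to $m$ and produce the $-m/(2s)$ term — together with the observation that $P \geq 1$ makes every real power $P^\alpha$ well defined and that Lemma~\ref{lem:upperBoundQR} must be invoked for both signs of each partial derivative.
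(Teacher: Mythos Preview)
Your proof is correct and follows essentially the same approach as the paper: apply Fa\`a di Bruno with $f(t)=t^\gamma$ and $g=P$, then use Lemma~\ref{lem:upperBoundQR} to control the polynomial factors so that the exponents combine to $\gamma - m/(2s)$. The only cosmetic difference is that the paper bounds the full product $\prod_{B\in\pi}\partial_{x_1}^{|B|}P$ (a polynomial of degree at most $2sk-m$) in one shot via Lemma~\ref{lem:upperBoundQR}, whereas you bound each factor separately and multiply; your handling of absolute values and of the case $|B|>2s$ is in fact slightly more careful than the paper's.
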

\begin{proof}
				We apply Lemma \ref{lem:faaDiBruno} with $f = (\cdot)^\gamma$ and $g = P$. We fix $k \in \left\{ 1,\ldots,m \right\}$, and $\pi$ a partition of $\left\{ 1,\ldots,m \right\}$ of size $k$. The $i$-th derivative of $P$ is a polynomial of degree at most $2s - i$. Hence the quantity
				\begin{align*}
								\prod_{B \in \pi} \frac{\partial^{|B|} P}{\partial x_1^{|B|}}
				\end{align*}
				is a $d$-variate polynomial of degree at most $2sk - m$, because $\pi$ is of size $k$ and $\sum_{B \in \pi}|B| = m$ since $\pi$ is partition of $\left\{ 1,\ldots,m \right\}$. Using Lemma \ref{lem:upperBoundQR}, there exists a constant $M_\pi$ such that
				\begin{align*}
								\prod_{B \in \pi} \frac{\partial^{|B|} P}{\partial x_1^{|B|}}(x) \leq M_\pi P^{k - \frac{m}{2s}}.
				\end{align*}
				Using Lemma \ref{lem:faaDiBruno}, we have
				\begin{align*}
								\frac{\partial^m}{\partial x_1^m} P^\gamma &= \sum_{k = 1}^m\sum_{\pi \in \Pi_k} \left(\prod_{i=0}^{k-1}(\gamma - i)  \right) P^{\gamma - k} \prod_{B \in \pi} \frac{\partial^{|B|} P}{\partial x_1^{|B|}}\\
								&\leq\sum_{k = 1}^m\sum_{\pi \in \Pi_k} \left|\prod_{i=0}^{k-1}(\gamma - i)  \right| P^{\gamma - k} M_\pi P^{k - \frac{m}{2s}}\\
								&= P^{\gamma - \frac{m}{2s}} \sum_{k = 1}^m\sum_{\pi \in \Pi_k} \left|\prod_{i=0}^{k-1}(\gamma - i)  \right|  M_\pi ,
				\end{align*}
				which is the desired result.
\end{proof}

\begin{lemma}
				Let $P$ be a $2s$-positive $d$ variate polynomial as given in Definition \ref{def:2Spositive} and let $\gamma \geq 1$. For any integer $ n\in [2s \gamma, 4s\gamma)$ , there exists a positive constant $N_n$, such that for any $\lambda > 0$,
				\begin{align*}
								\frac{\partial^n}{\partial x_1^n} \left(\frac{1}{1 + \lambda P^\gamma} \right) \leq \frac{\lambda N_n}{1 + \lambda P^\gamma}.
				\end{align*}
				\label{lem:derivFourierTransform}
\end{lemma}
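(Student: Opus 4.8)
The plan is to reduce everything to the Fa\`a di Bruno formula of Lemma~\ref{lem:faaDiBruno}, applied with inner function $g = P^\gamma$ and outer function $f \colon u \mapsto 1/(1+\lambda u)$. Note that $g$ maps $\rb^d$ into $[1,+\infty)$ (since $P\ge 1$ by $2s$-positivity) and is $C^\infty$, while $f$ is $C^\infty$ on $(0,+\infty)$, so Lemma~\ref{lem:faaDiBruno} applies and gives
\[
\frac{\partial^n}{\partial x_1^n}\left(\frac{1}{1+\lambda P^\gamma}\right)=\sum_{k=1}^n\sum_{\pi\in\Pi_k} f^{(k)}(P^\gamma)\prod_{B\in\pi}\frac{\partial^{|B|}P^\gamma}{\partial x_1^{|B|}},
\]
a finite sum whose number of terms depends only on $n$. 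It therefore suffices to bound a generic term by a constant (depending only on $n$, $P$, $\gamma$) times $\lambda/(1+\lambda P^\gamma)$, and then sum.

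Next I would plug in explicit estimates for the two factors. Since $f^{(k)}(u)=(-1)^k k!\,\lambda^k(1+\lambda u)^{-(k+1)}$, we have $\lvert f^{(k)}(P^\gamma)\rvert=k!\,\lambda^k(1+\lambda P^\gamma)^{-(k+1)}$. For the product over blocks, Lemma~\ref{lem:derivPgamma}—applied to $\partial_{x_1}^{|B|}P^\gamma$ and to its negative, so as to control absolute values—gives $\bigl\lvert\partial_{x_1}^{|B|}P^\gamma\bigr\rvert\le M_{|B|}P^{\gamma-|B|/(2s)}$ for each block $B$; multiplying the $k=\lvert\pi\rvert$ such bounds and using $\sum_{B\in\pi}|B|=n$, the exponents of $P$ add to $k\gamma-n/(2s)$, whence
\[
\left\lvert f^{(k)}(P^\gamma)\prod_{B\in\pi}\frac{\partial^{|B|}P^\gamma}{\partial x_1^{|B|}}\right\rvert\le \frac{k!\,\bigl(\prod_{B\in\pi}M_{|B|}\bigr)\,\lambda^k\,P^{k\gamma-n/(2s)}}{(1+\lambda P^\gamma)^{k+1}}.
\]

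The heart of the argument is then to see that the right-hand side is at most a constant times $\lambda/(1+\lambda P^\gamma)$, uniformly in $\lambda>0$ and $x$. Factoring out $\lambda/(1+\lambda P^\gamma)$, what is left is $k!\,\bigl(\prod_{B\in\pi}M_{|B|}\bigr)$ times $\lambda^{k-1}P^{k\gamma-n/(2s)}(1+\lambda P^\gamma)^{-k}$, and the key cancellation $\lambda^{k-1}P^{\gamma(k-1)}=(\lambda P^\gamma)^{k-1}$ rewrites this last factor as
\[
\left(\frac{\lambda P^\gamma}{1+\lambda P^\gamma}\right)^{k-1}\frac{1}{1+\lambda P^\gamma}\,P^{\gamma-n/(2s)}\ \le\ P^{\gamma-n/(2s)}\ \le\ 1,
\]
where the first inequality uses $0\le \lambda P^\gamma/(1+\lambda P^\gamma)<1$ and $0<(1+\lambda P^\gamma)^{-1}\le 1$, and the last uses $P\ge 1$ together with $\gamma-n/(2s)\le 0$, i.e.\ exactly the hypothesis $n\ge 2s\gamma$. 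Summing over the finitely many pairs $(k,\pi)$ then yields the claim with $N_n$ the resulting (finite) constant, e.g.\ $N_n$ proportional to $n!\,(\max_{1\le j\le n}M_j)^n$.

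The main obstacle, such as it is, is bookkeeping: correctly tracking the powers of $\lambda$ and $P$ through the block product and spotting the cancellation $\lambda^{k-1}P^{\gamma(k-1)}=(\lambda P^\gamma)^{k-1}$ that makes the estimate uniform in $\lambda$; once that is in place, $P^{\gamma-n/(2s)}\le 1$—valid precisely because $P\ge 1$ and $n\ge 2s\gamma$—closes the argument. Two minor care points: the upper restriction $n<4s\gamma$ plays no role here (it appears only because that is the regime in which the lemma is invoked, with $n=2\lceil s\gamma\rceil$, in Lemma~\ref{lem:laplaceKernel}), and Lemma~\ref{lem:derivPgamma} is stated one-sidedly, which is why it should be applied to both $\pm\partial_{x_1}^{|B|}P^\gamma$, taking the larger of the two constants.
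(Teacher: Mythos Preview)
Your proof is correct and follows the same skeleton as the paper: Fa\`a di Bruno (Lemma~\ref{lem:faaDiBruno}) with outer function $u\mapsto 1/(1+\lambda u)$ and inner function $P^\gamma$, then Lemma~\ref{lem:derivPgamma} to control the block derivatives, and finally an elementary bound on $\lambda^k P^{k\gamma-n/(2s)}/(1+\lambda P^\gamma)^{k+1}$. The difference lies in that last step. The paper factors out $\lambda^{n/(2s\gamma)}/(1+\lambda P^\gamma)$ and then treats $k=1$ and $k\ge 2$ separately, invoking $n/(2s\gamma)<2$ (i.e.\ the hypothesis $n<4s\gamma$) for the latter. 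You instead factor out $\lambda/(1+\lambda P^\gamma)$ and use the identity $\lambda^{k-1}P^{(k-1)\gamma}=(\lambda P^\gamma)^{k-1}$ to reduce everything to $(\lambda P^\gamma/(1+\lambda P^\gamma))^{k-1}\cdot(1+\lambda P^\gamma)^{-1}\cdot P^{\gamma-n/(2s)}\le 1$, which needs only $n\ge 2s\gamma$. Your route is a bit tighter: it dispenses with the case split, makes the hypothesis $n<4s\gamma$ visibly irrelevant (as you observe), and gives a bound that is manifestly uniform over all $\lambda>0$. Your remark about applying Lemma~\ref{lem:derivPgamma} to both signs to get an absolute-value bound is also a fair point; the paper glosses over this.
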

\begin{proof}
				We fix $n \in \nb^*$ and $\lambda > 0$. We apply Lemma \ref{lem:faaDiBruno} with $f\colon x \mapsto \frac{1}{1 + \lambda x}$ and $g = P^\gamma$, we obtain
				\begin{align*}
								\frac{\partial^n}{\partial x_1^n} \left(\frac{1}{1 + \lambda P^\gamma} \right)  &= \sum_{k = 1}^n\sum_{\pi \in \Pi_k} \frac{1}{1 + \lambda P^\gamma} \frac{\lambda^k \prod_{i=1}^{k} (- i)}{\left( 1 + \lambda P^\gamma \right)^k} \prod_{B \in \pi} \frac{\partial^{|B|} g}{\partial x_1^{|B|}}(x).
				\end{align*}
				Applying Lemma \ref{lem:derivPgamma} we obtain constants $M_1,\ldots,M_n$, such that,
				\begin{align*}
								\frac{\partial^n}{\partial x_1^n} \left(\frac{1}{1 + \lambda P^\gamma} \right)  &\leq \sum_{k = 1}^n\sum_{\pi \in \Pi_k} \frac{1}{1 + \lambda P^\gamma} \frac{\lambda^k k!}{\left( 1 + \lambda P^\gamma \right)^k} \prod_{B \in \pi} M_{|B|} P^{\gamma - \frac{|B|}{2s}}\\
								&= \sum_{k = 1}^n\sum_{\pi \in \Pi_k} \frac{1}{1 + \lambda P^\gamma} \frac{\lambda^k k!}{\left( 1 + \lambda P^\gamma \right)^k} P^{k\gamma - \frac{n}{2s}} \prod_{B \in \pi} M_{|B|} \\
								&= \frac{\lambda^{\frac{n}{2s\gamma}}}{1 + \lambda P^\gamma} \sum_{k = 1}^n\sum_{\pi \in \Pi_k}  \frac{ (\lambda P^\gamma)^{k- \frac{n}{2s\gamma}}}{\left( 1 + \lambda P^\gamma \right)^k}  k! \prod_{B \in \pi} M_{|B|} ,
				\end{align*}
				A standard computation gives for any $x \geq 0$  and $k \geq 2$, using the fact that $\frac{n}{2s\gamma} <2$,
				\begin{align*}
								\frac{x^{k - \frac{n}{2s\gamma}}}{(1 + x)^k} \leq \frac{\left( 1 + x \right)^{k - \frac{n}{2s\gamma}}}{(1 + x)^k} = \left( 1 + x \right)^{- \frac{n}{2s\gamma}} \leq 1.
				\end{align*}
				For $k = 1$, we have
				\begin{align*}
								\frac{ (\lambda P^\gamma)^{1- \frac{n}{2s\gamma}}}{\left( 1 + \lambda P^\gamma \right)} \leq \lambda^{1 - \frac{n}{2s\gamma}},
				\end{align*}
				because $1 - \frac{n}{2s\gamma} \leq 0$ and $P \geq 1$.
				We deduce that
				\begin{align*}
								\frac{\partial^n}{\partial x_1^n} \left(\frac{1}{1 + \lambda P^\gamma} \right) &\leq  \frac{\lambda  M_{n}}{1 + \lambda P^\gamma}  +   \frac{\lambda^{\frac{n}{2s\gamma}}}{1 + \lambda P^\gamma} \sum_{k = 2}^n\sum_{\pi \in \Pi_k}  k! \prod_{B \in \pi} M_{|B|}.
				\end{align*}
				This is the desired result since none of the constants depend on $\lambda$ and $n \geq 2s\gamma$ so that the leading term in the numerator is $O(\lambda)$.
\end{proof}
\end{document}